\documentclass{article}
\usepackage{kr}

\usepackage{times}
\usepackage{soul}
\usepackage{url}
\usepackage{xr-hyper}
\usepackage[hidelinks]{hyperref}
\usepackage[utf8]{inputenc}
\usepackage[small]{caption}
\usepackage{graphicx}
\usepackage{amsmath}
\usepackage{amsthm}
\usepackage{amssymb}
\usepackage{booktabs}
\usepackage{multicol}
\usepackage{multirow}
\usepackage{pifont}
\usepackage{algorithm}
\usepackage[noend]{algpseudocode}
\usepackage{tikz}
\usetikzlibrary{arrows.meta, positioning, calc, fit}
\definecolor{intendedcolor}{RGB}{50,50,50}
\definecolor{shortcutcolor}{RGB}{200,50,50}
\definecolor{comp1color}{RGB}{70,130,180}
\definecolor{comp2color}{RGB}{200,100,50}

\usepackage{newfloat}
\usepackage{listings}
\DeclareCaptionStyle{ruled}{labelfont=normalfont,labelsep=colon,strut=off} % DO NOT CHANGE THIS
\floatstyle{ruled}
\newfloat{listing}{tb}{lst}{}
\floatname{listing}{Listing}

\newtheorem{definition}{Definition}
\newtheorem{theorem}{Theorem}

\newtheorem{proposition}{Proposition}

\newtheorem{example}{Example}

\theoremstyle{remark}
\newtheorem{remark}{Remark}
\newtheorem{observation}{Observation}

\title{Constraint-Based Analysis of Reasoning Shortcuts\\in Neurosymbolic Learning}

\author{%
Akihiro Takemura$^1$\and
Katsumi Inoue$^1$\and
Masaaki Nishino$^2$
\affiliations
$^1$National Institute of Informatics, Tokyo, Japan\\
$^2$Communication Science Laboratories, NTT, Inc., Japan\\
\emails
\{atakemura, inoue\}@nii.ac.jp,
masaaki.nishino@ntt.com
}

\begin{document}

\maketitle

\begin{abstract}
Neurosymbolic systems can satisfy logical constraints during learning without achieving the intended concept-label correspondence; this is a problem known as reasoning shortcuts.
We formalize reasoning shortcuts as a constraint satisfaction problem and investigate under which conditions concept mappings are uniquely determined by the constraints.
We prove that a discrimination property (requiring that no valid concept mapping can be transformed into another valid mapping by swapping two concept values) is necessary for shortcut-freeness under bijective mappings, but demonstrate via a counterexample that it is insufficient even when the constraint graph is connected.
We develop an ASP-based algorithm that verifies whether a given constraint set uniquely determines the intended concept mapping, with proven soundness and completeness.
When shortcuts are detected, a greedy repair algorithm eliminates them by augmenting the constraint set, converging in at most $k$ iterations, where $k$ is the number of alternative valid mappings.
We further provide a complexity classification: deciding shortcut-freeness is coNP-complete, counting shortcuts is \#P-complete, and finding minimal repairs is NP-hard.
We also establish sample complexity bounds showing that logarithmically many label queries suffice for disambiguation in favorable cases, while querying all ambiguous positions suffices in the worst case.
Experiments across eight benchmark domains validate our approach.
\end{abstract}

\section{Introduction}

Neurosymbolic AI systems integrate neural perception with symbolic reasoning, combining the pattern recognition capabilities of deep learning with the reliability and explainability of symbolic logic \cite{besoldNeuralSymbolicLearningAndReasoning,manhaeveNeuralProbabilisticLogic2021a,yangNeurASPEmbracingNeural2020,badreddineLogicTensorNetworks2022}. 
However, recent work has identified a critical problem called \emph{reasoning shortcuts}, where models satisfy constraints while learning concept mappings that differ from the intended correspondence \cite{marconatoNotAllNeuroSymbolic2023,yangAnalysisAbductiveLearning2024}.

Consider a neurosymbolic system for visual arithmetic: neural networks predict digit concepts from images, and constraints enforce arithmetic relationships (e.g., the labels assigned to two images must sum to 3). 
While the system may achieve high accuracy on training tasks, the learned concepts might be \emph{swapped}, predicting 1 as 2 and vice versa, yet still satisfy all constraints. 
This reasoning shortcut undermines both correctness (concepts do not correspond to their intended labels) and robustness (fails on out-of-distribution tasks) \cite{marconatoBEARSMakeNeuroSymbolic2024a,vankriekenNeurosymbolicReasoningShortcuts2025}.
We focus on the class of constraint-based neurosymbolic systems where logical rules over concept mappings are explicit (Definition~\ref{def:nesy_problem}); our results characterize when such a constraint set uniquely determines the intended concept mapping, and at what computational cost.

Recent empirical work has developed mitigation strategies including uncertainty-aware methods \cite{marconatoBEARSMakeNeuroSymbolic2024a}, theoretical risk analysis \cite{yangAnalysisAbductiveLearning2024}, prototypical neurosymbolic architectures that encourage models to rely on correct internal evidence \cite{andolfiRightRightReasons}, and comprehensive benchmarks \cite{bortolottiNeurosymbolicBenchmarkSuite2024}. 
However, it remains unclear \emph{whether and when constraints uniquely determine concept mappings}, which is essential for deploying neurosymbolic systems with confidence.

We approach this question by formalizing reasoning shortcuts as a \emph{constraint satisfaction problem} (CSP). 
Given a set of constraints $C$ over concept mappings $\phi: N \to S$, we ask, does $C$ admit exactly one valid mapping (shortcut-free), or do multiple valid mappings exist (shortcuts present)? 
This CSP perspective enables us to:
\begin{itemize}
    \item \emph{Prove necessary conditions}: 
    Discrimination (no valid mapping related to another by a concept transposition) is necessary for shortcut-freeness (Theorem~\ref{thm:discrimination_necessary}), but insufficient in general. 
    We construct an explicit counterexample where discrimination holds yet shortcuts persist via longer permutation cycles, even with a connected constraint graph (Example~\ref{cex:discrimination_fails}).
    
    \item \emph{Classify complexity}: Deciding shortcut-freeness is coNP-complete, counting shortcuts is \#P-complete, and finding minimal repairs is NP-hard (Theorems~\ref{thm:complexity_conp}--\ref{thm:complexity_repair}).
    
    \item \emph{Bound disambiguation cost}: We define a shortcut multiplicity measure with connections to the M-unambiguity~\cite{wangLearningLatentModels2023}, and prove that logarithmically many label queries suffice for disambiguation in favorable cases, while querying all ambiguous positions suffices in the worst case (Theorem~\ref{thm:sample_complexity}).
    
    \item \emph{Provide practical algorithms}: a verification algorithm based on \emph{Answer Set Programming} (ASP)~\cite{gebserConflictdrivenAnswerSet2012}, whose native model enumeration makes it well-suited to exhaustive shortcut search, with correctness guarantees (Algorithm~\ref{alg:asp_verification}), greedy repair that eliminates shortcuts by augmenting the constraint set with convergence guarantees (Algorithm~\ref{alg:repair}), and active learning strategies. 
\end{itemize}

Our key observation is that reasoning shortcuts arise from \emph{symmetries in the constraint structure}. 
Disconnected constraint graphs allow independent concept swaps within components (Example~\ref{ex:simple_addition_intro}), while symmetric constraints such as modulo arithmetic admit rotational shortcuts even when the graph is connected (Example~\ref{cex:discrimination_fails}).
This connects our work to classical symmetry-breaking in CSPs \cite{gent2006symmetry}, but extends it to the neurosymbolic setting where a specific ground-truth mapping must be uniquely determined.

Our work complements existing neurosymbolic approaches, e.g., 
shortcut detection methods \cite{marconatoNotAllNeuroSymbolic2023,yangAnalysisAbductiveLearning2024}, 
ASP-based systems \cite{yangNeurASPEmbracingNeural2020,skryaginAnswerSetNetworks2024}, 
Semantic Loss \cite{DBLP:conf/icml/XuZFLB18}, 
and Concept Bottleneck Models (CBMs) \cite{kohConceptBottleneckModels2020}, 
by providing the \emph{formal foundations}: we prove when shortcuts can and cannot exist, characterize their computational complexity, and supply verification and repair algorithms with correctness guarantees. 

\begin{figure}[t]
\centering
\input{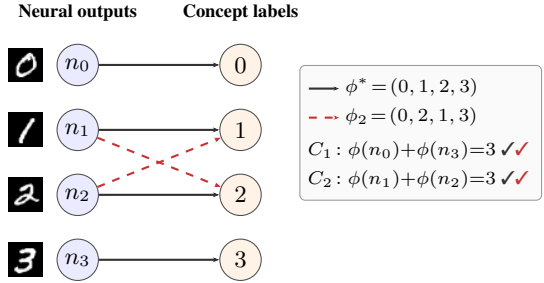}
\caption{Reasoning shortcut in the 4-node addition problem (Example~\ref{ex:simple_addition_intro}). 
Both the intended mapping $\phi^*$ (solid) and the shortcut $\phi_2$ (dashed) satisfy all constraints, but $\phi_2$ incorrectly swaps concepts 1 and 2.}
\label{fig:shortcut_overview}
\end{figure}

\begin{example}[Motivating Example: Simple Addition]
\label{ex:simple_addition_intro}
Consider learning a mapping from 4 neural outputs to 4 digit concepts using addition constraints (Figure \ref{fig:shortcut_overview}):

\emph{Problem setting:}
\begin{itemize}
    \item Neural outputs: $N = \{n_0, n_1, n_2, n_3\}$
    \item Concept labels: $S = \{0, 1, 2, 3\}$
    \item Constraints: \\ 
            $C_1: \phi(n_0) + \phi(n_3) = 3$; $C_2: \phi(n_1) + \phi(n_2) = 3$
    \item Intended concept mapping\\
            $\phi^* = \{n_0 \mapsto 0, n_1 \mapsto 1, n_2 \mapsto 2, n_3 \mapsto 3\}$\\
            or equivalently in tuple notation, $\phi(n_i)$ as the $i$-th entry: 
            $\phi^* = (0, 1, 2, 3)$
\end{itemize}

\emph{Issue:} 
There are multiple bijective mappings that satisfy all constraints. 
For example:
\begin{itemize}
    \item $\phi_1 = (0, 1, 2, 3)$ --- intended mapping
    \item $\phi_2 = (0, 2, 1, 3)$ --- swaps $n_1 \leftrightarrow n_2$
    \item $\phi_3 = (3, 1, 2, 0)$ --- swaps $n_0 \leftrightarrow n_3$
    \item ... (8 total bijective valid mappings)
\end{itemize}

The network may learn any of these 8 mappings during training. 
If the learned mapping differs from $\phi^*$, test-time predictions will be systematically incorrect.

The constraint graph $G_C$ (Definition~\ref{def:constraint_graph}; Figure~\ref{fig:constraint_graphs}) decomposes into two disconnected components $\{n_0, n_3\}$ and $\{n_1, n_2\}$, and this disconnection allows the two component-level transpositions to combine into 8 valid bijections.
\end{example}
This paper also provides a mechanism to: 
(1) detect such shortcuts via ASP-based verification, and
(2) eliminate them by augmenting the constraint set (e.g., adding ``$\phi(n_1) = 1$'' reduces the 8 valid mappings to 4).

Our framework provides immediately deployable tools for practitioners: ASP verification directly checks shortcut-freeness without requiring analytical proofs, repair algorithms systematically eliminate shortcuts with convergence guarantees, and active learning strategies achieve near-optimal label efficiency. 

This paper is organized as follows: 
Section~\ref{sec:related_works} surveys related work.
Section~\ref{sec:preliminaries} introduces the problem setting and core definitions. 
Section~\ref{sec:discrimination} develops theoretical foundations: necessary conditions, their limitations, symmetry analysis, sample complexity bounds, and practical label selection strategies. 
Section~\ref{sec:algorithms} presents verification and repair algorithms with complexity analysis.
Section~\ref{sec:experiments} provides experimental validation across 8 benchmark domains.
Section~\ref{sec:conclusion} concludes with open problems.

\section{Related Works}\label{sec:related_works}

\subsection{Reasoning Shortcuts in Neurosymbolic Systems}

Marconato et al. \shortcite{marconatoNotAllNeuroSymbolic2023} first characterized reasoning shortcuts in neurosymbolic learning, identifying four key conditions under which they arise. 
They demonstrated that models can achieve high task accuracy while learning concept mappings that differ from the intended correspondence, compromising both interpretability and out-of-distribution robustness.
Building on this, Marconato et al. \shortcite{marconatoBEARSMakeNeuroSymbolic2024a} introduced \textit{BEARS}, an uncertainty-aware framework for detecting and mitigating shortcuts through Bayesian estimation. 
Marconato et al. \shortcite{marconatoNeuroSymbolicContinualLearning2023a} further showed that reasoning shortcuts persist across continual learning scenarios, requiring explicit concept rehearsal strategies.

Recent theoretical work by Yang et al. \shortcite{yangAnalysisAbductiveLearning2024} provided the first formal analysis of shortcut risk, quantifying it through knowledge base complexity, sample size, and hypothesis space. 
They proved that Abductive Learning (ABL) algorithms reduce shortcut risk compared to standard neurosymbolic approaches by selecting appropriate distance functions. 
van Krieken et al. \shortcite{vankriekenNeurosymbolicReasoningShortcuts2025} demonstrated that the independence assumption, commonly used in neurosymbolic frameworks to simplify inference, prevents models from correctly representing uncertainty over concept combinations, thereby hindering awareness of reasoning shortcuts.
In a complementary direction, Takemura and Inoue \shortcite{takemuraDifferentiableLogicProgrammingRS} apply matrix-based differentiable logic programming to empirically mitigate shortcuts through one-to-one atom grounding; our work addresses the orthogonal question of when constraints formally determine uniqueness.

To facilitate systematic evaluation, Bortolotti et al. \shortcite{bortolottiNeurosymbolicBenchmarkSuite2024} introduced \textit{RSBench}, a comprehensive benchmark suite providing customizable tasks affected by reasoning shortcuts, along with formal verification procedures for assessing their presence. 
More recently, Bortolotti et al. \shortcite{bortolottiShortcutsIdentifiabilityConceptbased2025} formalized ``joint reasoning shortcuts'' (JRSs) in CBMs with learned inference layers, showing that reducing deterministic JRSs to zero provably prevents all JRSs and yields identifiability.
Their identifiability result for CBMs with learned inference layers complements ours: they show \emph{when} eliminating deterministic shortcuts suffices, while we characterize \emph{whether} a given constraint set admits shortcuts and at what computational cost.

\subsection{Neurosymbolic Integration and Constraint-Based Learning}
Several frameworks integrate neural perception with symbolic constraints. 
DeepProbLog \cite{manhaeveNeuralProbabilisticLogic2021a} combines neural predicates with probabilistic logic programming; 
Logic Tensor Networks \cite{badreddineLogicTensorNetworks2022} embed logical formulas as differentiable tensor operations; and 
Semantic Loss \cite{DBLP:conf/icml/XuZFLB18} derives differentiable losses from logical constraints via weighted model counting, with recent extensions to autoregressive models \cite{ahmedPseudoSemanticLossAutoregressive}. 
PiShield \cite{stoianPiShieldPyTorchPackage2024} provides shield layers guaranteeing constraint satisfaction during inference regardless of input.
Concept Bottleneck Models (CBMs) \cite{kohConceptBottleneckModels2020} take a complementary approach, routing predictions through human-interpretable concept layers to enable test-time interventions. 
Recent extensions include generative CBMs \cite{ismailConceptBottleneckGenerative2023} and analysis of joint reasoning shortcuts in CBMs with learned inference layers \cite{bortolottiShortcutsIdentifiabilityConceptbased2025}.
These approaches share an implicit assumption that constraints or bottleneck architectures should suffice to ensure the intended concept-label correspondence.
Our work shows this assumption requires formal verification, i.e., constraints may admit multiple valid mappings even when all are satisfied.
More broadly, constraint-based formulations have proven valuable for ML interpretability beyond the neurosymbolic setting.
Ignatiev et al. \shortcite{ignatievReasoningBasedLearningInterpretable2021} survey reasoning and constraint-based approaches to learning inherently interpretable models such as decision trees and decision sets, while Shrotri et al. \shortcite{shrotriConstraintDrivenExplanations2022} use Boolean constraints to guide explanation generation for black-box models. 

In formal explainable AI research, constraint solvers have been used to derive provably correct explanations of black-box predictions \cite{darwicheLogicExplainableAI2023,marquessilvaDeliveringTrustworthyAI2022}.
These approaches share our use of constraint-based reasoning over learned models but address a different question: given a fixed model and input, what is a (minimal) sufficient reason for the prediction? 
Our setting reverses this perspective: given a constraint set used \emph{during} learning, does it admit a unique concept-label correspondence?
Both perspectives concern when symbolic structure provides sufficient determination, but at different stages of the ML pipeline (explanation vs.\ pre-deployment verification).
Our CSP-based analysis of reasoning shortcuts extends this line of work to the neurosymbolic setting, where the goal is not to explain predictions but to verify whether constraints uniquely determine concept-label correspondences.

\subsection{ASP-Based Neurosymbolic Systems and Symmetry Breaking}
ASP has emerged as a tool for integrating symbolic reasoning with neural perception. 
NeurASP \cite{yangNeurASPEmbracingNeural2020} treats neural outputs as probability distributions over ASP atoms, enabling symbolic rules to guide training. 
SLASH \cite{skryaginNeuralProbabilisticAnswerSet2022} scales this via pruning of stochastically insignificant groundings, and Answer Set Networks \cite{skryaginAnswerSetNetworks2024} translate ASP programs into graph neural networks for GPU-accelerated reasoning. 
These systems use ASP for \emph{inference-time probabilistic reasoning}.
Our use of ASP is different: we perform \emph{compile-time verification} of structural properties, checking whether constraints uniquely determine concept mappings before deployment. 
This connects to classical symmetry breaking in CSPs \cite{gent2006symmetry}, but differs in that we must verify a specific ground-truth mapping $\phi^*$ is uniquely determined, not merely find any solution efficiently.

Existing work either detects shortcuts empirically \cite{marconatoNotAllNeuroSymbolic2023,marconatoBEARSMakeNeuroSymbolic2024a,bortolottiNeurosymbolicBenchmarkSuite2024}, quantifies shortcut risk under distributional assumptions \cite{yangAnalysisAbductiveLearning2024}, or assumes constraints suffice for correctness \cite{DBLP:conf/icml/XuZFLB18,kohConceptBottleneckModels2020}. 
We provide the formal foundations: a CSP formalization with necessary conditions (Theorem~\ref{thm:discrimination_necessary}), complexity classification (Theorems~\ref{thm:complexity_conp}--\ref{thm:complexity_repair}), ASP-based verification with correctness guarantees (Algorithm~\ref{alg:asp_verification}), and sample complexity bounds connecting shortcut multiplicity to disambiguation requirements (Theorem~\ref{thm:sample_complexity}). 
This shifts the question from ``how do we detect shortcuts?'' to ``when do constraints guarantee uniqueness?''

%%%%%%%%%%%%%%%%%%%%%%%%%%%%%%%%%%%%%%%%%%%%%%%%%%%%%%%%%%%%%%%%%%%%%%%%%%%%%%%%

\section{Preliminaries}\label{sec:preliminaries}

\begin{definition}[Constraint-Based Neurosymbolic Learning Problem]
\label{def:nesy_problem}
A \emph{constraint-based neurosymbolic learning} (NSL) problem is a tuple $\mathcal{P} = (N, S, C, \phi^*, D)$ where:
\begin{itemize}
    \item $N = \{n_1, \ldots, n_m\}$ is a finite set of $m$ neural outputs
    \item $S = \{s_1, \ldots, s_r\}$ is a finite set of $r$ concept labels
    \item $C$ is a finite set of constraints over mappings $\phi: N \to S$
    \item $\phi^*: N \to S$ is the intended (ground-truth) concept mapping
    \item $D$ is a dataset of observations
\end{itemize}
The triple $(N, S, C)$ defines a constraint satisfaction problem (CSP) whose variables are the neural outputs in $N$, domain is $S$, and constraints are $C$.
\end{definition}

This definition captures NeSy systems in which logical constraints over concept mappings are explicit. 
Frameworks such as DeepProbLog, NeurASP, and Logic Tensor Networks with declared inference rules can be analyzed in our framework by extracting their constraint structure. 
Frameworks that operate without explicit symbolic constraints (e.g., end-to-end learned representations) fall outside our scope.

In practice, the intended mapping $\phi^*$ is unknown to the learner, and recovering it is the goal of neurosymbolic learning.
We include $\phi^*$ in the problem definition because our aim is not to solve the learning task itself, but to \emph{analyze} the constraint structure: given a hypothetical intended mapping, we ask whether the constraints uniquely determine the mapping or admit alternative valid mappings (reasoning shortcuts).
This is analogous to verifying identifiability conditions before deployment, rather than during training.

\begin{definition}[Constraint Graph]
\label{def:constraint_graph}
The \emph{constraint graph} of the CSP $(N, S, C)$ is the undirected graph $G_C=(N,E)$ whose vertex set $N$ is the set of neural outputs and whose edge set is $E = \{(n_i, n_j) \mid n_i \neq n_j \text{ and both } n_i, n_j \text{ appear in some constraint } c \in C\}$.
\end{definition}
Constraint graphs for the running examples are shown in Figure~\ref{fig:constraint_graphs}.

\begin{example}[MNIST-Half as NeSy Learning Problem]
\label{ex:mnist_half_nesy_problem}
The MNIST-Half task instantiates Definition~\ref{def:nesy_problem} as follows:
\begin{itemize}
    \item $N = \{n_0, n_1, n_2, n_3, n_4\}$ (neural outputs for digits 0-4)
    \item $S = \{0, 1, 2, 3, 4\}$ (digit concept labels)
    \item $C = \{C_1, C_2, C_3, C_4\}$ where:
    \begin{align*}
    C_1&: \phi(n_0) + \phi(n_0) = 0 & C_2&: \phi(n_0) + \phi(n_1) = 1\\
    C_3&: \phi(n_2) + \phi(n_3) = 5 & C_4&: \phi(n_2) + \phi(n_4) = 6
    \end{align*}
    \item $\phi^* = (0,1,2,3,4)$
    \item $D$: Dataset of digit pairs with sum labels
\end{itemize}

The constraint graph $G_C$ has two components: $\{n_0, n_1\}$ (edge from $C_2$) and $\{n_2, n_3, n_4\}$ (edges from $C_3, C_4$); the unary constraint $C_1$ pins $\phi(n_0) = 0$ directly.
\end{example}

\begin{definition}[Valid Mapping and Shortcut]
\label{def:valid_shortcut}
Let $\mathcal{P} = (N, S, C, \phi^*, D)$ be a constraint-based NSL problem.

The set of \emph{valid mappings} is:
$$\Phi_C = \{\phi : N \to S \mid \phi \text{ satisfies all constraints in } C\}$$

A valid mapping $\phi \in \Phi_C$ is a \emph{reasoning shortcut} if $\phi \neq \phi^*$.

The problem is \emph{shortcut-free} if $|\Phi_C| = 1$ and the unique element of $\Phi_C$ is $\phi^*$.
\end{definition}

\emph{Standing assumption:} Throughout this paper, we assume the intended mapping satisfies the constraints, i.e., $\phi^* \in \Phi_C$. 
This implies $\Phi_C \neq \varnothing$. 
When this assumption does not hold, we treat it as an error condition (see Algorithm~\ref{alg:asp_verification}, line 9: \texttt{INTENDED-INVALID}).

\begin{remark}[Restriction to Bijections]
In this work, we assume the intended mapping $\phi^*$ is bijective, which is natural for classification tasks where each neural output should correspond to exactly one concept and each concept should be represented exactly once.
We then restrict the search space to bijective mappings when analyzing shortcuts.
This restriction serves as a structural prior: if $\phi^*$ is the unique valid mapping (i.e., $|\Phi_C| = 1$), then non-bijective alternatives are already excluded, so no generality is lost for the shortcut-freeness question.
When $|\Phi_C| > 1$, restricting to bijections may exclude some non-bijective shortcuts, but this is intentional: our analysis focuses on the harder case where shortcuts preserve the bijectivity structure of $\phi^*$.
Algorithm~\ref{alg:asp_verification} supports both bijective and non-bijective enumeration, and our experiments report both (Table~\ref{tab:results}).
\end{remark}

\begin{definition}[Shortcut Multiplicity]
\label{def:shortcut_multiplicity}
The \emph{shortcut multiplicity} of constraint set $C$ is $SM(C) = |\Phi_C| - 1$. 
We denote $k = SM(C)$ throughout. 
An NSL problem is \emph{shortcut-free} if $SM(C) = 0$ (equivalently, $k = 0$).
\end{definition}

\emph{Notation for bijective mappings.}
To avoid ambiguity between general mappings and bijective mappings, we introduce explicit notation:

\begin{itemize}
    \item $\Phi_C^{\mathrm{all}} = \{\phi : N \to S \mid \phi \text{ satisfies all constraints in } C\}$
    \item $\Phi_C^{\mathrm{bij}} = \{\phi \in \Phi_C^{\mathrm{all}} \mid \phi \text{ is bijective}\}$
    \item $SM^{\mathrm{bij}}(C) = |\Phi_C^{\mathrm{bij}}| - 1$
\end{itemize}

Unless stated otherwise, when we write ``$\Phi_C$'' we mean $\Phi_C^{\mathrm{bij}}$ (bijective mappings only), and similarly ``$SM(C)$'' means $SM^{\mathrm{bij}}(C)$. 
For example, Algorithm~\ref{alg:asp_verification} explicitly enforces bijectivity through constraints (lines 4-5).

Wang et al. \shortcite{wangLearningLatentModels2023} study the learnability of multi-instance partial-label learning under a transition function $\sigma$ that maps a tuple of gold labels to a weak supervision signal. 
Their \emph{M-unambiguity} property characterizes when the gold labels can be uniquely recovered from the weak signal. 
Our setting is complementary: rather than inferring unobserved labels from weak supervision, we ask whether a set of logical constraints uniquely determines a concept mapping. 
The following measure adapts their notion to our CSP setting and clarifies the structural relationship between our shortcut multiplicity $SM(C)$, the per-pair disagreement $A(C)$, and the set of disagreement positions $\Delta_C$.

\begin{definition}[M-Ambiguity]\label{def:m_ambiguity}
Adapting the notion of M-unambiguity from \cite{wangLearningLatentModels2023} to our setting, we define the \emph{ambiguity} of a constraint set $C$ as:
$$A(C) = \max_{\phi, \phi' \in \Phi_C} |\{n \in N : \phi(n) \neq \phi'(n)\}|$$
This measures the maximum number of neural outputs on which any two valid mappings disagree.
\end{definition}

\begin{proposition}[Relationship Between Measures]
\label{prop:measures_relationship}
Define the \emph{disagreement set} of constraint set $C$ as $$\Delta_C = \{n \in N : \exists \phi, \phi' \in \Phi_C \text{ with } \phi(n) \neq \phi'(n)\},$$ i.e., the set of positions where valid mappings disagree. Then:
\begin{enumerate}
    \item $SM(C) = 0 \Leftrightarrow |\Phi_C| = 1$
    \item If $SM(C) = 0$, then $A(C) = 0$ and $|\Delta_C| = 0$
    \item If $A(C) = 0$, then $SM(C) = 0$
    \item If $SM(C) > 0$, then $A(C) \geq 1$ and $|\Delta_C| \geq 1$
    \item $A(C) \leq |\Delta_C| \leq |N|$ (with equality $|\Delta_C| = |N|$ only when 
          mappings disagree everywhere)
\end{enumerate}
\end{proposition}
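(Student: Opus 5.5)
The plan is to prove the five items directly from the definitions of $SM(C)$, $A(C)$ and $\Delta_C$. Everything is routed through one elementary fact: a pair $(\phi,\phi')$ with $\phi\neq\phi'$ exists in $\Phi_C$ iff $|\Phi_C|\ge 2$. Throughout I use the standing assumption $\phi^*\in\Phi_C$, so $|\Phi_C|\ge 1$ and $SM(C)=|\Phi_C|-1\ge 0$. This holds both for $\Phi_C^{\mathrm{bij}}$, since $\phi^*$ is bijective, and for $\Phi_C^{\mathrm{all}}$.

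For item 1, I will unfold Definition~\ref{def:shortcut_multiplicity}: $SM(C)=0$ iff $|\Phi_C|-1=0$.

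For item 2, if $|\Phi_C|=1$, then every pair $\phi,\phi'\in\Phi_C$ satisfies $\phi=\phi'$. Hence each disagreement set $\{n:\phi(n)\neq\phi'(n)\}$ is empty, so the maximum $A(C)$ is $0$. Likewise no $n$ witnesses membership in $\Delta_C$, so $|\Delta_C|=0$.

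For item 3, I will argue by contraposition. If $SM(C)\neq 0$, then $|\Phi_C|\ge 2$, so there are distinct $\phi\neq\phi'$. Distinct functions differ at some $n\in N$, so that pair has disagreement set of size $\ge 1$, giving $A(C)\ge 1$.

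The same witness pair gives item 4. The point $n$ where they differ lies in $\Delta_C$, so $|\Delta_C|\ge 1$.

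For item 5, the upper bound $|\Delta_C|\le|N|$ is immediate since $\Delta_C\subseteq N$. For $A(C)\le|\Delta_C|$, I will note that for every pair $\phi,\phi'\in\Phi_C$ the set $\{n:\phi(n)\neq\phi'(n)\}$ is contained in $\Delta_C$ by the definition of $\Delta_C$, with that very pair as the witness. So each term in the maximum is at most $|\Delta_C|$, and so is the maximum itself; the maximum is over a finite nonempty set because $\Phi_C\neq\varnothing$.

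For the parenthetical equality clause, I read it as a characterization rather than a claim to prove sharply. Equality $|\Delta_C|=|N|$ means every position is disputed by some pair of valid mappings, which is what ``disagree everywhere'' expresses. I will state this unfolding and not claim that a single pair must disagree on all of $N$, since that would be false in general.

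The only step needing care is this interpretive one in item 5, together with making the nonemptiness assumption explicit so that the maximum defining $A(C)$ is well defined. The other items are purely definitional.
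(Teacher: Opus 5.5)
Your proof is correct and takes the same route as the paper: items 1--4 follow by unfolding the definitions, and item 5 follows because each pairwise disagreement set lies inside $\Delta_C \subseteq N$. You also read the equality clause the same way the paper does, as every position being disputed by some pair rather than by a single pair. Your explicit use of the standing assumption $\phi^* \in \Phi_C$ makes $A(C)$ well defined and gives $SM(C) \ge 0$, which the contrapositive in item 3 needs; the paper leaves this detail implicit.
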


\begin{proof}
Parts (1)-(4) follow directly from definitions.
For part (5): By definition, $A(C)$ measures the maximum disagreement 
between any single pair of mappings, while $\Delta_C$ is the union over all pairs. 
The maximum pairwise disagreement cannot exceed the union: $A(C) \leq |\Delta_C|$.
Furthermore, by definition $\Delta_C \subseteq N$, so $|\Delta_C| \leq |N|$. 
Equality $|\Delta_C| = |N|$ occurs when for every position $n \in N$, there exist $\phi, \phi' \in \Phi_C$ with $\phi(n) \neq \phi'(n)$. 
\end{proof}

\section{Theoretical Framework}\label{sec:discrimination}

We now investigate under what conditions a constraint set uniquely determines the intended mapping. 
We begin by identifying a necessary condition, the discrimination property, then show its limitations, examine the symmetry structure of the solution space, analyze the sample complexity of disambiguation when constraints alone are insufficient, and describe a practical label selection strategy with a formal bound.

\subsection{Discrimination is Necessary}\label{sec:necessary_conditions}

\begin{definition}[Discrimination Property]
\label{def:discrimination}
Let $\phi: N \to S$ be a concept mapping. 
For distinct concepts $s_i, s_j \in S$, define the \emph{transposed mapping} $\phi_{s_i \leftrightarrow s_j}: N \to S$ by:
$$\phi_{s_i \leftrightarrow s_j}(n) = \begin{cases}
s_j & \text{if } \phi(n) = s_i \\
s_i & \text{if } \phi(n) = s_j \\
\phi(n) & \text{otherwise}
\end{cases}$$

A constraint set $C$ is \emph{discriminative} if:
$$\forall s_i, s_j \in S \text{ with } s_i \neq s_j, \; \forall \phi \in \Phi_C: \quad \phi_{s_i \leftrightarrow s_j} \notin \Phi_C$$
\end{definition}
This definition quantifies over all \(\phi \in \Phi_C\) for clarity. 
Because shortcut-freeness implies there is only one solution, i.e., \(\Phi_C=\{\phi^*\}\), under shortcut-freeness this reduces to discrimination at \(\phi^*\).
Informally, discrimination forbids all 2-cycles (transpositions) in the automorphism group of valid mappings (see Definition~\ref{def:automorphism} and Proposition~\ref{prop:rigidity}).

\begin{theorem}[Discrimination is necessary]
\label{thm:discrimination_necessary}
Let $\mathcal{P} = (N, S, C, \phi^*, D)$ with $|N| = |S|$. 
If $\mathcal{P}$ is shortcut-free, then $C$ is discriminative.
\end{theorem}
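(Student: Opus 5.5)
We argue by contraposition. Suppose $C$ is not discriminative; we will produce a second valid bijective mapping, contradicting $|\Phi_C| = 1$. By Definition~\ref{def:discrimination}, non-discrimination gives distinct concepts $s_i \neq s_j$ in $S$ and some $\phi \in \Phi_C$ with $\phi_{s_i \leftrightarrow s_j} \in \Phi_C$. The goal is to show that $\phi$ and $\phi_{s_i \leftrightarrow s_j}$ are different elements of $\Phi_C$.

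Here we use the hypothesis $|N| = |S|$ together with the convention that $\Phi_C$ denotes $\Phi_C^{\mathrm{bij}}$. Since $\phi \in \Phi_C$ is a bijection $N \to S$, it is in particular surjective. Hence there is some $n \in N$ with $\phi(n) = s_i$. By the definition of the transposed mapping, $\phi_{s_i \leftrightarrow s_j}(n) = s_j \neq s_i = \phi(n)$, so $\phi_{s_i \leftrightarrow s_j} \neq \phi$.

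We also check that $\phi_{s_i \leftrightarrow s_j}$ is again a bijection. It equals $\tau \circ \phi$, where $\tau$ is the transposition $(s_i\ s_j)$ on $S$, and a composition of bijections is a bijection. Strictly speaking this is not needed, since membership in $\Phi_C$ is already part of the hypothesis, but it confirms that the witness lies in $\Phi_C^{\mathrm{bij}}$.

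Combining these facts, $\Phi_C$ contains two distinct mappings $\phi$ and $\phi_{s_i \leftrightarrow s_j}$. Therefore $|\Phi_C| \geq 2$, i.e.\ $SM(C) \geq 1$ by Proposition~\ref{prop:measures_relationship}(1), and $\mathcal{P}$ is not shortcut-free. This contradicts the assumption, so $C$ is discriminative.

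The only delicate step is ensuring that the swap actually changes the mapping. If $\phi$ were not surjective and missed both $s_i$ and $s_j$, the transposition would leave $\phi$ unchanged and produce no second solution. Surjectivity, obtained from bijectivity with $|N| = |S|$, is exactly what rules this out, so that step is where the cardinality hypothesis is used.
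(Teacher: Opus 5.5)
Your proof is correct and is essentially the paper's argument written as a contrapositive. The paper fixes $\Phi_C=\{\phi^*\}$ and uses bijectivity of $\phi^*$ to conclude that $\phi^*_{s_i\leftrightarrow s_j}\neq\phi^*$ cannot lie in $\Phi_C$; you instead use surjectivity of an arbitrary $\phi\in\Phi_C^{\mathrm{bij}}$ to show that a non-discriminating swap produces a second, distinct element of $\Phi_C$, which is the same key step.
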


\begin{proof}
Assume $\mathcal{P}$ is shortcut-free, so $\Phi_C = \{\phi^*\}$.
Consider any distinct $s_i, s_j \in S$. 
Since $\phi^*$ is bijective, there exist distinct $n_i, n_j \in N$ with $\phi^*(n_i) = s_i$ and $\phi^*(n_j) = s_j$.
The transposition swaps these assignments, so $\phi^*_{s_i \leftrightarrow s_j} \neq \phi^*$. 
Since $|\Phi_C| = 1$, we conclude $\phi^*_{s_i \leftrightarrow s_j} \notin \Phi_C$.
As this holds for all distinct pairs, $C$ is discriminative.
\end{proof}

\begin{example}[Discrimination in MNIST-Half]
\label{ex:mnist_half_discrimination}
In MNIST-Half (Example~\ref{ex:mnist_half_nesy_problem}), consider the intended mapping $\phi^* = (0, 1, 2, 3, 4)$.

Attempt the transposition $2 \leftrightarrow 3$:
$$\phi^*_{2 \leftrightarrow 3} = (0, 1, \mathbf{3}, \mathbf{2}, 4)$$

Check constraints:
\begin{itemize}
    \item $C_3$: $\phi(n_2) + \phi(n_3) = 3 + 2 = 5$ (satisfied)
    \item $C_4$: $\phi(n_2) + \phi(n_4) = 3 + 4 = 7 \neq 6$ (violated)
\end{itemize}

The transposition \emph{violates} $C_4$, so $\phi^*_{2 \leftrightarrow 3} \notin \Phi_C$. 
Thus $C$ discriminates the pair $(2,3)$ at $\phi^*$.
\end{example}

\subsection{Discrimination Can Fail in Practice}\label{sec:discrimination_fail}

Theorem~\ref{thm:discrimination_necessary} shows discrimination is necessary, but the following example shows that it is not sufficient.

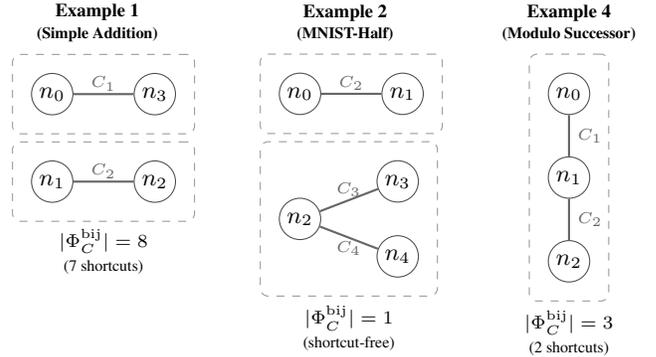
\begin{figure}[htb]
\centering
\begin{tikzpicture}[
    node distance=0.7cm and 0.8cm,
    nnode/.style={circle, draw=black!70, fill=none, minimum size=0.55cm, 
                  inner sep=0pt, font=\footnotesize},
    edge/.style={thick, black!60},
    clabel/.style={font=\tiny, fill=none, inner sep=1pt},
    titlestyle/.style={font=\scriptsize\bfseries, align=center},
    compbox/.style={draw=black!40, dashed, rounded corners=3pt, inner sep=7pt},
    resultlabel/.style={font=\scriptsize, align=center},
]

% ============================================
% LEFT: 4-node example (Example 1)
% ============================================
\node[titlestyle] (title1) {Example~\ref{ex:simple_addition_intro}\\[-2pt]{\fontsize{6}{7}\selectfont (Simple Addition)}};

\node[nnode, below=0.3cm of title1, xshift=-0.6cm] (a0) {$n_0$};
\node[nnode, right=of a0] (a3) {$n_3$};
\node[nnode, below=0.6cm of a0] (a1) {$n_1$};
\node[nnode, right=of a1] (a2) {$n_2$};

% Edges
\draw[edge] (a0) -- node[clabel, above] {$C_1$} (a3);
\draw[edge] (a1) -- node[clabel, above] {$C_2$} (a2);

% Component grouping (dashed boxes)
\node[compbox, fit=(a0)(a3)] {};
\node[compbox, fit=(a1)(a2)] {};

% Result label
\node[resultlabel, below=0.5cm of $(a1)!0.5!(a2)$] (res1) {%
    $|\Phi_C^{\mathrm{bij}}| = 8$\\[-1pt]
    {\tiny (7 shortcuts)}};

% ============================================
% MIDDLE: MNIST-Half (Example 2)
% ============================================
\node[titlestyle, right=1.6cm of title1] (title2) {Example~\ref{ex:mnist_half_nesy_problem}\\[-2pt]{\fontsize{6}{7}\selectfont (MNIST-Half)}};

% Component 1: n0 -- n1 via C2
\node[nnode, below=0.3cm of title2, xshift=-0.6cm] (b0) {$n_0$};
\node[nnode, right=of b0] (b1) {$n_1$};

% Component 2: n2 central, connected to n3 and n4
\node[nnode, below=1.1cm of b0] (b2) {$n_2$};
\node[nnode, above right=0.1cm and 0.9cm of b2] (b3) {$n_3$};
\node[nnode, below right=0.1cm and 0.9cm of b2] (b4) {$n_4$};

% Add C1 as a self-loop on n0
% \draw[edge] (b0) edge[loop left, looseness=4] node[clabel, left] {$C_1$} (b0);
% \node[inner sep=0pt, left=0.35cm of b0] (b0loop) {};

% Edges -- component 1
\draw[edge] (b0) -- node[clabel, above] {$C_2$} (b1);

% Edges -- component 2
\draw[edge] (b2) -- node[clabel, above] {$C_3$} (b3);
\draw[edge] (b2) -- node[clabel, below] {$C_4$} (b4);

% Component grouping (dashed boxes)
\node[compbox, fit=(b0)(b1)] {};
\node[compbox, fit=(b2)(b3)(b4)] {};

% Result label
\node[resultlabel, below=0.5cm of $(b2.south)!0.5!(b4.south)$] (res2) {%
    $|\Phi_C^{\mathrm{bij}}| = 1$\\[-1pt]
    {\tiny (shortcut-free)}};

% ============================================
% RIGHT: Example 4 (3-node mod)
% ============================================
\node[titlestyle, right=1.2cm of title2] (title3) {Example~\ref{cex:discrimination_fails}\\[-2pt]{\fontsize{6}{7}\selectfont (Modulo\ Successor)}};

\node[nnode, below=0.3cm of title3] (m0) {$n_0$};
\node[nnode, below=0.55cm of m0] (m1) {$n_1$};
\node[nnode, below=0.55cm of m1] (m2) {$n_2$};

% Edges
\draw[edge] (m0) -- node[clabel, right, xshift=2pt] {$C_1$} (m1);
\draw[edge] (m1) -- node[clabel, right, xshift=2pt] {$C_2$} (m2);

% Single component box (connected)
\node[compbox, fit=(m0)(m1)(m2)] {};

% Result label
\node[resultlabel, below=0.25cm of m2] (res3) {%
    $|\Phi_C^{\mathrm{bij}}| = 3$\\[-1pt]
    {\tiny (2 shortcuts)}};

\end{tikzpicture}
\caption{Constraint graphs for the three running examples.
Example~\ref{ex:simple_addition_intro} has two disconnected components, admitting 7 bijective shortcuts.
Example~\ref{ex:mnist_half_nesy_problem} (MNIST-Half) also has disconnected components, but stronger arithmetic constraints achieve uniqueness under bijectivity (a unary constraint is omitted from the figure).
Example~\ref{cex:discrimination_fails} has a connected graph, yet admits 2 shortcuts via rotational symmetry (3-cycles), showing that connectivity alone does not prevent shortcuts.}
\label{fig:constraint_graphs}
\end{figure}

\begin{example}[Discrimination is necessary but not sufficient]
\label{cex:discrimination_fails}
Consider a 3-node problem with modulo successor constraints:
\begin{itemize}
    \item $N = \{n_0, n_1, n_2\}$
    \item $S = \{0, 1, 2\}$
    \item Constraints: \\
        $C_1: \phi(n_1) \equiv \phi(n_0) + 1 \pmod{3}$; 
        $C_2: \phi(n_2) \equiv \phi(n_1) + 1 \pmod{3}$
    \item Intended: $\phi^* = (0, 1, 2)$
\end{itemize}

The constraint graph is \emph{connected} ($n_0 - n_1 - n_2$), yet three bijective valid mappings exist:
\begin{itemize}
    \item $\phi_1 = (0, 1, 2)$ \quad (intended, $\phi^*$)
    \item $\phi_2 = (1, 2, 0)$ \quad (rotation by 1)
    \item $\phi_3 = (2, 0, 1)$ \quad (rotation by 2)
\end{itemize}
Each $\phi_i$ assigns concepts so that consecutive outputs differ by 1 modulo 3, satisfying both $C_1$ and $C_2$.

\emph{Discrimination holds:}
Consider $\phi_1 = (0,1,2)$. 
Swapping $0 \leftrightarrow 1$ gives $(1,0,2)$: check $C_1$: $0 \equiv 1 + 1 \pmod{3}$? No. 
Similarly, every transposition applied to any valid mapping violates at least one constraint. 
Thus $C$ satisfies the discrimination property.

\emph{Shortcuts exist via 3-cycles:} 
The cyclic permutation $\sigma: 0 \mapsto 1,\; 1 \mapsto 2,\; 2 \mapsto 0$ acts on solutions by composition $(\sigma \circ \phi)(n) = \sigma(\phi(n))$:
\begin{align*}
    \sigma \circ \phi_1 &= (1, 2, 0) = \phi_2,\\
    \sigma \circ \phi_2 &= (2, 0, 1) = \phi_3,\\
    \sigma \circ \phi_3 &= (0, 1, 2) = \phi_1.
\end{align*}
So $\sigma \in \mathrm{Aut}(\mathcal{X})$. 
Together with $\sigma^2$ and the identity, $\mathrm{Aut}(\mathcal{X}) = \{\mathrm{id}, \sigma, \sigma^2\} \cong \mathbb{Z}/3$ is non-trivial, and acts transitively on $\Phi_C$.

Discrimination eliminates 2-cycles from $\mathrm{Aut}(\mathcal{X})$, but cannot detect 3-cycles, or longer cycles in general. 
This shows that discrimination, while necessary (Theorem~\ref{thm:discrimination_necessary}), is not sufficient for shortcut-freeness, even when the constraint graph is connected.
\end{example}

Example~\ref{cex:discrimination_fails} shows that the gap between discrimination and shortcut-freeness is not merely a matter of constraint graph connectivity: the modulo successor constraints form a connected graph, yet shortcuts persist through 3-cycles that discrimination cannot detect (Figure~\ref{fig:constraint_graphs}).
This contrasts with Example~\ref{ex:simple_addition_intro}, where shortcuts arise from a different mechanism, i.e., disconnected components allowing independent transpositions.
Together, these examples show that discrimination is not a practical guardrail on its own, and the question of sufficient conditions for uniqueness leads naturally to analyzing value symmetries.

\subsection{Value-Symmetry Elimination}\label{sec:sufficient_conditions}

Having established the limits of discrimination, we turn to sufficient conditions by examining the symmetry structure of the solution space.

\begin{definition}[Symmetric Group and Automorphism Group]
\label{def:automorphism}
Let $\mathcal{X} = (N, S, C)$ be the CSP instance restricted to bijections $\phi: N \to S$.
A \emph{permutation} of $S$ is a bijection $\sigma: S \to S$.
The set of all permutations of $S$ forms the \emph{symmetric group} $\mathrm{Sym}(S)$.
Each $\sigma \in \mathrm{Sym}(S)$ acts on solutions by composition: $(\sigma \circ \phi)(n) = \sigma(\phi(n))$ for all $n \in N$.

The \emph{automorphism group} of $\mathcal{X}$ is:
$$\mathrm{Aut}(\mathcal{X}) = \{\sigma \in \mathrm{Sym}(S) : \sigma \circ \phi \in \Phi_C \text{ for all } \phi \in \Phi_C\}$$
\end{definition}

\begin{proposition}[Value-Symmetry Elimination]
\label{prop:rigidity}
If $\mathrm{Aut}(\mathcal{X})$ is trivial (i.e., $\mathrm{Aut}(\mathcal{X}) = \{\mathrm{id}\}$), then no two distinct solutions in $\Phi_C$ are related by a permutation in $\mathrm{Aut}(\mathcal{X})$.
\end{proposition}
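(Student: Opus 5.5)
The statement is close to a tautology once the definitions are unfolded, so the plan is a direct argument from Definition~\ref{def:automorphism}, with no case analysis. Assume $\mathrm{Aut}(\mathcal{X}) = \{\mathrm{id}\}$ and suppose, for contradiction, that there are distinct $\phi, \phi' \in \Phi_C$ and some $\sigma \in \mathrm{Aut}(\mathcal{X})$ with $\phi' = \sigma \circ \phi$. Triviality of the group forces $\sigma = \mathrm{id}$, so $\phi' = \mathrm{id} \circ \phi = \phi$, contradicting distinctness. That is the whole argument.

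The order of steps is as follows. First, make precise what ``related by a permutation in $\mathrm{Aut}(\mathcal{X})$'' means: there exists $\sigma \in \mathrm{Aut}(\mathcal{X})$ with $\sigma \circ \phi = \phi'$. Second, invoke the hypothesis that the only such $\sigma$ is $\mathrm{id}$. Third, observe that the action by composition fixes every solution under the identity.

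I would also add a short remark making the action's faithfulness explicit. Because we restrict to bijections with $|N| = |S|$, each $\phi \in \Phi_C$ is surjective onto $S$. Hence $\sigma \circ \phi = \phi$ forces $\sigma(s) = s$ for every $s \in S$, so that $\sigma = \mathrm{id}$. The same point shows that any two bijective solutions $\phi, \phi'$ are always related by the unique permutation $\sigma = \phi' \circ \phi^{-1} \in \mathrm{Sym}(S)$. The content of the proposition is therefore that this $\sigma$ cannot lie in $\mathrm{Aut}(\mathcal{X})$.

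There is no real obstacle. The only thing to be careful about is not overclaiming: a trivial automorphism group does \emph{not} imply $|\Phi_C| = 1$. The permutation $\phi' \circ \phi^{-1}$ may map $\phi$ to $\phi'$ without preserving all of $\Phi_C$, so it fails to be an automorphism. I would note this to keep the proposition consistent with the discussion in Example~\ref{cex:discrimination_fails}, where the nontrivial group $\mathbb{Z}/3$ is exactly what produces the shortcuts.
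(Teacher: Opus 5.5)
Your proposal is correct and is essentially the paper's own one-line proof: if $\phi' = \sigma \circ \phi$ with $\sigma \in \mathrm{Aut}(\mathcal{X}) = \{\mathrm{id}\}$, then $\sigma = \mathrm{id}$ and so $\phi' = \phi$. Your extra faithfulness remark and your caveat that triviality of $\mathrm{Aut}(\mathcal{X})$ does not force $|\Phi_C| = 1$ are both accurate but not needed for the argument; the paper makes the same caveat in the discussion right after the proposition.
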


\begin{proof}
If $\phi' = \sigma \circ \phi$ for some $\sigma \in \text{Aut}(\mathcal{X})$, triviality forces $\sigma = \text{id}$, hence $\phi' = \phi$.
\end{proof}

Proposition~\ref{prop:rigidity} shows that a trivial automorphism group eliminates the multiplicity arising from value-permutation symmetries, but does not imply $|\Phi_C| = 1$: a permutation $\sigma \notin \mathrm{Aut}(\mathcal{X})$ might still map one particular solution to another.
The gap between trivial $\mathrm{Aut}(\mathcal{X})$ and uniqueness arises from \emph{symmetries that preserve some valid mappings but not all}. 
We illustrate two distinct mechanisms by which shortcuts arise, and note that they have different repair implications.

\emph{Single-orbit shortcuts (Example~\ref{cex:discrimination_fails}).}
Here $\mathrm{Aut}(\mathcal{X}) = \{\mathrm{id}, \sigma, \sigma^2\} \cong \mathbb{Z}/3$, where $\sigma$ is the 3-cycle $0 \mapsto 1 \mapsto 2 \mapsto 0$.
The group $\mathrm{Aut}(\mathcal{X})$ acts transitively on $\Phi_C$: every valid mapping is reachable from any other via a single rotation. 
Repair is correspondingly cheap: pinning a single position $\phi(n^*) = \phi^*(n^*)$ breaks the orbit and identifies $\phi^*$ uniquely (one constraint suffices, see Theorem~\ref{thm:repair_correctness}).

\emph{Component-decomposable shortcuts (Example~\ref{ex:simple_addition_intro}).}
Here disconnected components of the constraint graph admit independent rearrangements. 
The two components $\{n_0, n_3\}$ and $\{n_1, n_2\}$ each support an independent transposition, yielding $4 \times 2 = 8$ valid bijections. 
Repair requires \emph{at least one constraint per component admitting non-trivial rearrangement}, since pinning one component leaves the other free.

This contrast motivates Section~\ref{sec:algorithms}: verification via Algorithm~\ref{alg:asp_verification} detects shortcuts regardless of
mechanism, while greedy repair via Algorithm~\ref{alg:repair} resolves them at a cost determined jointly by the orbit structure of $\mathrm{Aut}(\mathcal{X})$ and the connectivity of $G_C$.

Discrimination (Definition~\ref{def:discrimination}) eliminates all 2-cycles from the value-symmetry group, but does not eliminate longer cycles or non-symmetry-based multiplicity. Two structural properties appear relevant but remain unproven as sufficient conditions.
First, \emph{constraint graph connectivity} (Definition~\ref{def:constraint_graph}) prevents independent component rearrangements as in Example~\ref{ex:simple_addition_intro}, but connected graphs may still admit shortcuts through longer permutation cycles (Example~\ref{cex:discrimination_fails}; see also Figure~\ref{fig:constraint_graphs}).
Second, \emph{constraint strength}: in MNIST-Half, arithmetic constraints like $\phi(n_2) + \phi(n_4) = 6$ fully determine values when combined with bijectivity, while the modulo successor constraints in Example~\ref{cex:discrimination_fails} do not. 
Formalizing when constraints are ``strong enough'' and whether connectivity and strength jointly suffice for uniqueness remains an open problem.

\subsection{Sample Complexity for Disambiguation}\label{sec:sample_complexity}

Since complete sufficient conditions for shortcut-freeness remain open, a natural fallback is \emph{disambiguation through labeled data}.
Given that $|\Phi_C| > 1$, how many concept labels are necessary to identify $\phi^*$ among the valid mappings? 
This question is of practical importance because even when constraints alone are insufficient, a small number of strategically chosen labels may resolve the remaining ambiguity. 
We formalize this as a query complexity problem over the candidate set $\Phi_C$.

\begin{theorem}[Sample Complexity of Label Queries]
\label{thm:sample_complexity}
Let $\Phi_C$ contain $k+1$ valid bijective mappings with $SM(C) = k \geq 1$.
Let $\Delta_C = \{n \in N : \exists \phi, \phi' \in \Phi_C,\; \phi(n) \neq \phi'(n)\}$ and let $r = |S|$.
Suppose a learner can make \emph{label queries}: given $n \in N$, an oracle returns $\phi^*(n) \in S$.
The learner may choose each query adaptively based on previous answers.
Then we have: 
\begin{enumerate}
    \item \emph{Lower bound:} Any learner requires at least $\lceil \log_r(k+1) \rceil$ queries to identify $\phi^*$.
    \item \emph{Upper bound:} Querying all positions in $\Delta_C$ identifies $\phi^*$ using at most $|\Delta_C| \leq r$ queries.
\end{enumerate}
The lower bound is achieved when $\Phi_C$ admits balanced partitions at each query step; the upper bound is achieved when each query eliminates only one candidate.
\end{theorem}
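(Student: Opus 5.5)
Both bounds are argued directly over the finite candidate set $\Phi_C$, which by the standing assumption contains $\phi^*$. The oracle answers queries consistently with $\phi^*$. Since $\phi^*$ is unknown to the learner, identification has to succeed for every possible choice of $\phi^* \in \Phi_C$. The argument therefore treats the learner as an adaptive decision tree: each query is labelled by a position $n \in N$, each node has at most $r = |S|$ outgoing branches (one per answer), and the tree must distinguish all $k+1$ candidates.

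\emph{Lower bound.} Correctness means every leaf can be consistent with at most one candidate. Otherwise two candidates $\phi \neq \phi'$ agreeing on every queried position along that path would be indistinguishable, and the learner would fail for one of them as the ground truth. A tree of depth $d$ with branching at most $r$ has at most $r^d$ leaves, so $r^d \geq k+1$. Hence the worst-case number of queries satisfies $d \geq \lceil \log_r(k+1) \rceil$. This is the standard information-theoretic (adversary) argument. When some query splits the surviving candidates into $r$ classes of equal size at every step, the tree is balanced and the bound is met, which accounts for the ``achieved'' remark.

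\emph{Upper bound.} Query every $n \in \Delta_C$. By the definition of $\Delta_C$, all mappings in $\Phi_C$ agree on every $n \notin \Delta_C$. Suppose $\phi,\phi' \in \Phi_C$ agree with the oracle on all of $\Delta_C$. Then they agree everywhere, so $\phi = \phi'$, and exactly one candidate, namely $\phi^*$, is consistent with the answers. This costs $|\Delta_C|$ queries. Since $\Delta_C \subseteq N$ and $|N| = |S| = r$ under the bijective setting (the same setting used in Theorem~\ref{thm:discrimination_necessary}), part (5) of Proposition~\ref{prop:measures_relationship} gives $|\Delta_C| \leq |N| = r$.

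The main obstacle is stating precisely what ``achieved'' means in the final sentence, since tightness depends on the instance. I would handle it by exhibiting the two extremes rather than proving a general characterization.
\begin{itemize}
\item For the lower bound, tightness follows from the balanced-partition hypothesis, by induction on the number of survivors.
\item For the upper bound, take an instance where each query, adaptive or not, separates at most one candidate from the rest. Example~\ref{ex:simple_addition_intro} after pinning is one candidate. Another is a family in which every pair of valid mappings differs by a transposition on disjoint positions. In such an instance the adversary answers so as to eliminate only one candidate per query, and the learner is forced toward $|\Delta_C|$ queries.
\end{itemize}
I would present these remarks as illustrations of when each bound is attained, not as part of the formal claim.
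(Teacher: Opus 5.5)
Your arguments for the two numbered claims are correct and follow the paper's proof. The upper bound is the same ``all valid mappings agree off $\Delta_C$'' argument. Your decision-tree leaf count $r^d \ge k+1$ is the rigorous version of the paper's heuristic that each answer carries at most $\log_2 r$ bits. For $|\Delta_C|\le r$ you give the justification the statement actually needs: $\Delta_C\subseteq N$, and $|N|=|S|$ because a bijection $N\to S$ exists. The paper's closing paragraph instead proves the complementary bound $|\Delta_C|\ge\lceil\log_r(k+1)\rceil$ and leaves $|\Delta_C|\le r$ implicit.

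What fails is your illustration of when the upper bound is attained.
\begin{itemize}
\item In Example~\ref{ex:simple_addition_intro} after pinning $\phi(n_1)=1$, you get $\Phi_C=\{(0,1,2,3),(3,1,2,0)\}$ and $|\Delta_C|=2$, but one query at $n_0$ suffices.
\item A family in which every pair of valid mappings differs by a single transposition has at most two members, since a product of two distinct transpositions is even. So again one query suffices.
\item The family consisting of $\phi^*$ together with $\phi^*_{s_i\leftrightarrow s_i'}$ for pairwise disjoint pairs $\{s_i,s_i'\}$, $i=1,\dots,k$, forces exactly $k=|\Delta_C|/2$ queries, not $|\Delta_C|$.
\end{itemize}

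No instance can serve your purpose. Under bijectivity, once every position of $\Delta_C$ except one, $n_0$, has been queried, each surviving candidate agrees with $\phi^*$ on $N\setminus\{n_0\}$. Bijectivity then forces agreement at $n_0$ as well. Hence $|\Delta_C|-1$ queries always suffice. So do $k$ queries, since each query at a position where survivors disagree removes at least one candidate. The worst-case optimum is therefore at most $\min\{k,|\Delta_C|-1\}$. The upper-bound half of the statement's final sentence cannot be established as a tightness claim, and the paper's proof does not argue it either.

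Your lower-bound tightness condition has a related defect. After an answer $s$, no survivor can take value $s$ at another position. So an equal $r$-way split is possible only at the first query, and your hypothesis is vacuous once two or more queries are needed. The bound is still attained under a looser balance condition, as in Example~\ref{ex:sample_complexity_demo}, where the candidates go $8\to2\to1$.
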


\begin{proof}
\emph{Lower bound:}
Each label query returns a value in $S$, providing at most $\log_2 |S|$ bits of information. 
Distinguishing among $k+1$ candidates requires at least $\log_2(k+1)$ bits. 
Therefore at least $\lceil \log_2(k+1) / \log_2 |S| \rceil = \lceil \log_{|S|}(k+1) \rceil$ queries are necessary, regardless of the adaptive strategy.

\emph{Upper bound:}
Query $\phi^*(n)$ for each $n \in \Delta_C$. 
After all queries, consider any remaining candidate $\phi \in \Phi_C$:
\begin{itemize}
    \item For $n \in \Delta_C$: the query at $n$ eliminates all $\phi$ with $\phi(n) \neq \phi^*(n)$, so $\phi(n) = \phi^*(n)$.
    \item For $n \in N \setminus \Delta_C$: by definition of $\Delta_C$, all valid mappings agree at $n$, so $\phi(n) = \phi^*(n)$.
\end{itemize}
Thus $\phi = \phi^*$, and $|\Delta_C|$ queries suffice.
Note that $\phi^*$ is never eliminated since it always agrees with its own labels.

\emph{Bound on $|\Delta_C|$:}
The $k+1$ valid mappings are pairwise distinct when restricted to $\Delta_C$ (they agree on $N \setminus \Delta_C$), so $k + 1 \leq r^{|\Delta_C|}$, giving $|\Delta_C| \geq \lceil \log_r(k+1) \rceil$.
\end{proof}

The gap between the lower and upper bounds depends on the structure of $\Phi_C$: favorable structure (as in Example~\ref{ex:sample_complexity_demo} below) allows the lower bound to be achieved, while adversarial cases may require all $|\Delta_C|$ queries.
In practice, several strategies approximate the oracle:
\begin{itemize}
    \item \emph{Random sampling} requires $O(|N|/|\Delta_C| \cdot k \log_2 k)$ expected queries
    \item \emph{Uncertainty sampling} achieves $O(\min\{|\Delta_C|, k\})$ queries (Proposition~\ref{prop:uncertainty_bound} in Section~\ref{sec:practical_guide})
    \item \emph{Greedy disambiguation} approximates the lower bound
\end{itemize}
These practical strategies are discussed in Section~\ref{sec:practical_guide}, with additional details in the supplementary material.

\begin{example}[Sample Complexity Demonstration]
\label{ex:sample_complexity_demo}
For the 4-node addition problem (Example~\ref{ex:simple_addition_intro}) with $k = 7$, $|S| = 4$, and $\Delta_C = \{n_0, n_1, n_2, n_3\}$:

\emph{Bounds:} Lower $= \lceil \log_4(8) \rceil = 2$; \quad Upper $= |\Delta_C| = 4$.

\emph{Achieved in 2 label queries} (matching the lower bound):
\begin{itemize}
    \item \textit{Query 1:} ``What is $\phi^*(n_0)$?'' Oracle returns $0$.
    Among the 8 valid bijections, only those with $\phi(n_0) = 0$ remain.
    Direct enumeration gives exactly two:
    $\phi_1 = (0, 1, 2, 3)$ and $\phi_2 = (0, 2, 1, 3)$. 
    The first query eliminates 6 of 8 candidates because the constraint $\phi(n_0) + \phi(n_3) = 3$, combined with bijectivity, forces $\phi(n_3) = 3$ once $\phi(n_0) = 0$.
    \item \textit{Query 2:} ``What is $\phi^*(n_1)$?'' Oracle returns $1$.
    This eliminates $\phi_2$ (which has $\phi_2(n_1) = 2$).
    Only $\phi_1 = \phi^*$ remains. Identification complete.
\end{itemize}
In adversarial cases (e.g., when each query removes only one candidate), all $|\Delta_C| = 4$ queries would be needed, matching the upper bound.
\end{example}

\subsection{Practical Label Selection Strategies}
\label{sec:practical_guide}

While Theorem~\ref{thm:sample_complexity} provides information-theoretic bounds assuming an oracle, we now present a practical strategy with a formal bound.
The strategy assumes access to (or maintenance of) the candidate set $\Psi \subseteq \Phi_C$, which can be approximated via bounded model enumeration (e.g., \texttt{clingo --models=100}), diversified sampling, or incremental verification.

\emph{Uncertainty Sampling.}
Maintain candidate valid mappings $\Psi \subseteq \Phi_C$ (discovered via Algorithm~\ref{alg:asp_verification}). 
At each step:
\begin{enumerate}
    \item For each $n \in N$, compute disagreement: $d(n) = |\{s \in S : \exists \phi \in \Psi, \phi(n) = s\}|$
    \item Select $n^* = \arg\max_{n \in N} d(n)$ (output with most disagreement)
    \item Query label $(n^*, \phi^*(n^*))$
    \item Remove all $\phi \in \Psi$ with $\phi(n^*) \neq \phi^*(n^*)$
\end{enumerate}

\begin{proposition}[Uncertainty Sampling Bound]
\label{prop:uncertainty_bound}
Assume oracle access to the full candidate set $\Psi=\Phi_C$.
Under uncertainty sampling, at most $\min\{k, |\Delta_C|\}$ labeled examples are needed to identify $\phi^*$, where $\Delta_C$ is the set of disagreement positions.
\end{proposition}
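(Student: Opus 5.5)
The plan is to prove two separate upper bounds on the number of rounds of uncertainty sampling, one in terms of $k$ and one in terms of $|\Delta_C|$, and then take the minimum. Both rest on an invariant and a progress property of each round.

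\emph{Invariant.} Throughout, $\phi^* \in \Psi$. This holds initially, since $\Psi = \Phi_C$ and we have the standing assumption $\phi^* \in \Phi_C$. The removal step (4) only discards mappings that disagree with the oracle answer $\phi^*(n^*)$, so $\phi^*$ itself is never removed.

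\emph{Progress while $|\Psi| \geq 2$.} Some position then has two candidates disagreeing on it, so $\max_n d(n) \geq 2$. The selected $n^*$ therefore carries at least two distinct values among the candidates in $\Psi$. At most one of these values equals $\phi^*(n^*)$, so at least one candidate is removed. Hence $|\Psi|$ drops by at least one per query. Starting from $|\Psi| = k+1$, the procedure stops with $\Psi = \{\phi^*\}$ after at most $k$ queries.

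\emph{Bound in terms of $|\Delta_C|$.} The key observation is that $n^*$ is always a position where the current $\Psi$ disagrees. Such a position lies in $\Delta_C$, because $\Psi \subseteq \Phi_C$. After the query at $n^*$, every surviving candidate satisfies $\phi(n^*) = \phi^*(n^*)$. So at that point $d(n^*) = 1$, and $\Psi$ can only shrink afterwards, so $d(n^*)$ stays $1$ forever and $n^*$ is never selected again while disagreement remains. Thus each query consumes a distinct element of $\Delta_C$, giving at most $|\Delta_C|$ queries.

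A termination subtlety arises once $\Psi$ is a singleton. Then all $d(n) = 1$, and the argmax would pick an arbitrary position. I will therefore state explicitly that the procedure halts when $|\Psi| = 1$, i.e., when $\max_n d(n) = 1$; only queries made while $|\Psi| \geq 2$ are counted. At halting, the invariant forces $\Psi = \{\phi^*\}$, so $\phi^*$ is identified.

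Combining the two bounds gives at most $\min\{k, |\Delta_C|\}$ labeled examples. I expect the main obstacle to be this halting convention: without it, the count can be off by irrelevant extra queries. The remaining steps are direct.
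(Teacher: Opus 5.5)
Your proof is correct and follows the same route as the paper: a query at $n^*$ with $d(n^*)\ge 2$ removes at least one candidate, which gives the bound $k$, and a queried position is never selected again, which gives the bound $|\Delta_C|$. You also make explicit three points the paper leaves implicit: the invariant $\phi^*\in\Psi$, the fact that $n^*\in\Delta_C$ because $\Psi\subseteq\Phi_C$, and the rule that the procedure halts once $|\Psi|=1$.
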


\begin{proof}
Each query selects position $n^*$ with maximum disagreement and obtains label $\phi^*(n^*)$. 
This eliminates all candidates $\phi \in \Psi$ with $\phi(n^*) \neq \phi^*(n^*)$, which is at least one candidate (since $d(n^*) \geq 2$ when $|\Psi| \geq 2$).
Once we query position $n$, all remaining candidates agree with $\phi^*$ at $n$, so each query uses a distinct position.
In the best case, each query eliminates approximately half the remaining candidates, requiring $\lceil \log_2(k+1) \rceil$ queries.
In the worst case, each query eliminates exactly one candidate, requiring $k$ queries but bounded by the number of disagreement positions: $\min(k, |\Delta_C|)$.
\end{proof}

This is an information-theoretic bound assuming perfect knowledge of $\Psi$, not a computational guarantee.

\section{Algorithms and Complexity}\label{sec:algorithms}

\subsection{Algorithmic Verification}\label{sec:verification}

We now translate the theoretical framework into a practical verification algorithm using Answer Set Programming (ASP).
Our use of ASP is not for solving the task instance, but for \textit{reasoning} about the constraint theory itself: verifying uniqueness, counting alternatives, and guiding repair.

\begin{algorithm}
\caption{Shortcut Verification (Pseudocode)}
\label{alg:asp_verification}
\begin{algorithmic}[1]
\Require Constraint set $C$, concepts $S$, neural outputs $N$ with $|N| = |S|$, intended mapping $\phi^*$
\Ensure \texttt{SHORTCUT-FREE}, \texttt{INTENDED-INVALID}, or list of shortcuts

\State // Initialize verification problem
\State Initialize search space: all mappings $\phi: N \to S$
\State Add all constraints from $C$
\State Add bijectivity constraint: 
\State \qquad each output (resp. concept) maps to exactly one concept (resp. output)

\State // Step 1: Verify intended mapping
\For{each constraint $c \in C$}
    \If{$\phi^*$ violates $c$}
        \State
        \Return \texttt{INTENDED-INVALID}
    \EndIf
\EndFor

\State // Step 2: Search for alternative valid mappings
\State Exclude $\phi^*$ from search space
\State $\Psi \leftarrow$ Find all valid bijective mappings

\If{$\Psi = \emptyset$}
    \State
    \Return \texttt{SHORTCUT-FREE}
\Else
    \State
    \Return $\Psi$ (each $\phi \in \Psi$ is a shortcut)
\EndIf
\end{algorithmic}
\end{algorithm}

The ASP encoding follows a uniform pattern.
For a given constraint-based neurosymbolic learning problem $\mathcal{P} = (N, S, C, \phi^*, D)$:
(i) declare neural outputs and concepts as domain atoms,
(ii) use choice rules with cardinality constraints to generate candidate mappings, optionally enforcing bijectivity via cardinality constraints (one per neural output and one per concept), 
(iii) encode each task constraint as an integrity constraint over value atoms, and 
(iv) exclude \(\phi^*\) via a joint integrity constraint to enumerate only alternative valid mappings. 
For instance, the arithmetic constraint \(\phi(n_0) + \phi(n_3) = 3\) becomes \texttt{:- value(n0,V0), value(n3,V3), V0+V3 != 3}.
This encoding is directly executable in \textit{clingo} with \texttt{--models=0} for full enumeration. 
Complete ASP encodings for all examples are available in the supplementary material.

\begin{proposition}[Algorithm Correctness]
\label{prop:asp_correctness}
Algorithm~\ref{alg:asp_verification} is correct:
\begin{enumerate}
    \item If it returns \texttt{INTENDED-INVALID}, then $\phi^* \notin \Phi_C$
    \item If it returns \texttt{SHORTCUT-FREE}, then $\phi^* \in \Phi_C$ and no other bijective mapping satisfies $C$
    \item If it returns shortcuts, then $\phi^* \in \Phi_C$ and there exist other bijective mappings satisfying $C$
\end{enumerate}
\end{proposition}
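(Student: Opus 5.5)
The plan is a direct case analysis on the three possible return points of Algorithm~\ref{alg:asp_verification}. I will rely on one assumption about the ASP back end (clingo): given a ground program, it enumerates exactly the set of answer sets. I will then argue that the answer sets of the encoding correspond one-to-one with elements of $\Phi_C^{\mathrm{bij}} \setminus \{\phi^*\}$.

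\textbf{Encoding correspondence.} First I would establish this correspondence as a lemma. The choice rule over \texttt{value(n,s)} atoms, with the two cardinality constraints (exactly one concept per output, exactly one output per concept), yields candidate answer sets. These are in bijection with bijections $\phi: N \to S$, via $\phi(n)=s \iff \texttt{value}(n,s)$ holds. Each task constraint $c \in C$ is encoded as an integrity constraint whose body is satisfied exactly when $\phi$ violates $c$. So a candidate survives all of them iff $\phi$ satisfies every constraint in $C$. The final integrity constraint, whose body is the conjunction of $\texttt{value}(n,\phi^*(n))$ over all $n$, kills exactly the candidate $\phi^*$. Bijectivity matters here: two bijections differ at some $n$, so no other candidate is removed. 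Hence the answer sets correspond exactly to $\Phi_C^{\mathrm{bij}}\setminus\{\phi^*\}$, and $\Psi$ equals this set.

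\textbf{Case (1).} If the algorithm returns \texttt{INTENDED-INVALID}, then Step~1 found some $c \in C$ with $\phi^*$ violating $c$. By the definition of $\Phi_C$, this gives $\phi^* \notin \Phi_C$.

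\textbf{Cases (2) and (3).} Reaching Step~2 means $\phi^*$ satisfies every $c \in C$. Since $\phi^*$ is bijective (standing remark), $\phi^* \in \Phi_C^{\mathrm{bij}}$. By the lemma, $\Psi = \emptyset$ iff no bijection other than $\phi^*$ satisfies $C$, i.e.\ $\Phi_C=\{\phi^*\}$; this is shortcut-freeness in the sense of Definition~\ref{def:valid_shortcut} and gives (2). Otherwise each $\phi\in\Psi$ is a valid bijective mapping distinct from $\phi^*$, so it is a shortcut, which gives (3). The same argument also yields completeness: every shortcut appears in $\Psi$.

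\textbf{Main obstacle.} The hard step is the lemma's claim that each task constraint is faithfully encoded, i.e.\ that the integrity constraint fires exactly on violating assignments. This is a per-constraint-type property, not something provable in full generality. I would state it as the assumption that the encoding of each $c$ is faithful, and verify it for the arithmetic and modulo forms used (e.g.\ \texttt{V0+V3 != 3}). I would also note that the encoding is a stratified program with only choice rules and constraints, so its answer sets are exactly the choices satisfying all integrity constraints. Termination is immediate because $N$ and $S$ are finite.
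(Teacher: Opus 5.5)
Your proposal is correct and follows essentially the same route as the paper: a case analysis on the three return points, resting on the fact that, once Step~1 has passed, the enumerated set $\Psi$ equals $\Phi_C^{\mathrm{bij}}\setminus\{\phi^*\}$. The paper reads this fact directly off the pseudocode (lines 2--5 and 11--12) rather than isolating an encoding-faithfulness lemma as you do, and its Part~3 argument is phrased as completeness (every shortcut ends up in $\Psi$); you additionally prove the stated direction of Part~3 directly and make explicit the use of the bijectivity of $\phi^*$.
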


\begin{proof}
\emph{Part 1:} Lines 7-9 explicitly check if $\phi^*$ satisfies all constraints in $C$. 
If any constraint is violated, the algorithm returns \texttt{INTENDED-INVALID} (line 9), so $\phi^* \notin \Phi_C$.

\emph{Part 2 (Soundness):} 
If the algorithm returns \texttt{SHORTCUT-FREE}, then $\phi^*$ passed all constraint checks (lines 7-9), so $\phi^* \in \Phi_C$.
After excluding $\phi^*$ (line 11), the search space consists of all bijective mappings satisfying $C$ except $\phi^*$ (lines 2-5, 12).
Since $\Psi = \emptyset$ (line 13), no other bijective mapping satisfies $C$.

\emph{Part 3 (Completeness):} 
If $\phi^* \in \Phi_C$ and other bijective mappings satisfy $C$, then Step~1 succeeds, and any bijective $\phi' \neq \phi^*$ satisfying $C$ is found and included in $\Psi$ (line 12), which is returned (line 16).
\end{proof}

%%%%%%%%%%%%%%%%%%%%%%%%%%%%%%%%%%%%%%%%%%%%%%%%%%%%%%%%%%%%%%%%%%%%%%%%%%%%%%%%%%%%

\subsection{Constraint Repair}\label{sec:constraint_repair}

When verification reveals shortcuts, the next step is to eliminate them by augmenting the constraint set.
Algorithm~\ref{alg:repair} adds \emph{unary pinning constraints} of the form $\phi(n) = \phi^*(n)$ for some $n \in N$. 
The NP-hardness result in Theorem~\ref{thm:complexity_repair} is stated for the more general decision problem in which candidate constraints are drawn from a fixed library of pinning constraints, where each candidate may pin one or more positions.

\begin{algorithm}
\caption{Greedy Shortcut Repair}
\label{alg:repair}
\begin{algorithmic}[1]
\Require Constraint set $C$, intended mapping $\phi^*$, maximum iterations $T$
\Require $\phi^* \in \Phi_C$ (intended mapping must be initially valid)
\Ensure Either \texttt{SHORTCUT-FREE} constraint set $C'$ or \texttt{TIMEOUT} with current $C'$

\State $C' \leftarrow C$, $t \leftarrow 0$
\Repeat
    \State $\Psi \leftarrow$ Run Algorithm~\ref{alg:asp_verification} with $(C', \phi^*)$
    \If{$\Psi = $ \texttt{SHORTCUT-FREE}}
        \State
        \Return $C'$
    \ElsIf{$\Psi = $ \texttt{INTENDED-INVALID}}
        \State
        \Return ERROR: $\phi^*$ inconsistent with constraints
    \EndIf
    \State Select $\phi_{sc} \in \Psi$ (any detected shortcut)
    \State $D_{sc} \leftarrow \{n \in N : \phi^*(n) \neq \phi_{sc}(n)\}$
    \State Select $n^* \in D_{sc}$ (arbitrary or by heuristic)
    \State $c_{new} \leftarrow $ constraint ``$\phi(n^*) = \phi^*(n^*)$''
    \State $C' \leftarrow C' \cup \{c_{new}\}$
    \State $t \leftarrow t + 1$
\Until{$t \geq T$}\\
\Return \texttt{TIMEOUT} with current $C'$
\end{algorithmic}
\end{algorithm}

\begin{theorem}[Repair Algorithm Correctness]
\label{thm:repair_correctness}
Let $C_0 = C$ and $C_i$ be the constraint set after iteration $i$ of Algorithm~\ref{alg:repair}. 
Assume $\phi^* \in \Phi_{C_0}$, $|\Phi_{C_0}| < \infty$ (finite number of bijections from $N$ to $S$ with $|N| = |S|$), and $T \geq k$ where $k = SM(C_0)$. 
Then:
\begin{enumerate}
    \item \emph{Monotonic decrease}: $SM(C_{i+1}) < SM(C_i)$ for each iteration that adds a constraint
    \item \emph{Preserves intended}: $\phi^* \in \Phi_{C_i}$ for all $i$
    \item \emph{Termination bound}: Algorithm terminates in at most $k$ iterations with $SM(C') = 0$
    \item \emph{Suboptimality}: The algorithm does not minimize $|C' \setminus C|$
\end{enumerate}
\end{theorem}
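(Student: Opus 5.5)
The plan is to argue by tracking the finite set $\Phi_{C_i}$ of valid bijections across iterations. Everything rests on one observation: adding a constraint can only shrink the solution set. Since $C_{i+1} = C_i \cup \{c_{new}\}$, any mapping satisfying every constraint of $C_{i+1}$ satisfies every constraint of $C_i$, so $\Phi_{C_{i+1}} \subseteq \Phi_{C_i}$. Bijectivity is imposed uniformly in Algorithm~\ref{alg:asp_verification}, so this inclusion holds for the bijective restriction as well.

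\emph{Part 2 (preserves intended).} I would prove this first, by induction on $i$. The base case is the hypothesis $\phi^* \in \Phi_{C_0}$. For the step, the added constraint is $\phi(n^*) = \phi^*(n^*)$, which $\phi^*$ satisfies trivially. Hence $\phi^* \in \Phi_{C_i}$ implies $\phi^* \in \Phi_{C_{i+1}}$. A consequence is that the \texttt{INTENDED-INVALID} branch is never taken, by Proposition~\ref{prop:asp_correctness} part~1.

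\emph{Part 1 (monotonic decrease).} An iteration adds a constraint only when Algorithm~\ref{alg:asp_verification} returns a nonempty $\Psi$. By Proposition~\ref{prop:asp_correctness} part~3, the selected $\phi_{sc}$ lies in $\Phi_{C_i}$ and $\phi_{sc} \neq \phi^*$. The set $D_{sc}$ is therefore nonempty, so $n^*$ is well defined, and $\phi_{sc}(n^*) \neq \phi^*(n^*)$. Thus $\phi_{sc}$ violates $c_{new}$ and $\phi_{sc} \notin \Phi_{C_{i+1}}$. Combined with the inclusion above, $\Phi_{C_{i+1}} \subsetneq \Phi_{C_i}$, which gives $SM(C_{i+1}) \le SM(C_i) - 1$.

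\emph{Part 3 (termination).} $SM$ is a nonnegative integer that starts at $k$ and drops by at least one per constraint-adding iteration. So after at most $k$ such iterations we reach some $C_j$ with $SM(C_j) = 0$. By Part~2 the unique element of $\Phi_{C_j}$ is $\phi^*$. The next call to Algorithm~\ref{alg:asp_verification} then returns \texttt{SHORTCUT-FREE} by Proposition~\ref{prop:asp_correctness} (completeness guarantees no shortcut is missed). The main obstacle is bookkeeping the budget correctly: with $T \ge k$, the loop counter never reaches $T$ before a shortcut-free check. Since $t$ counts added constraints and $t \le k \le T$ until $SM = 0$, the \texttt{TIMEOUT} exit is avoided. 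The edge case $t = k = T$ is delicate, because the \texttt{until} test fires just before the final verification call. I would handle it by noting that $SM(C_k) = 0$ at that point, so the returned $C'$ is already shortcut-free. Alternatively I would read the bound as requiring $T \ge k$ with the final check counted, and state the result in that form.

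\emph{Part 4 (suboptimality).} Here a counterexample suffices, and Example~\ref{cex:discrimination_fails} is a natural choice. Any single pin there isolates $\phi^*$, so the algorithm is optimal on that example and I would pick a different one. One option is a variant of Example~\ref{ex:simple_addition_intro} with a library pin fixing several positions at once, since the greedy algorithm only adds single-position pins. Another option, within unary pins, is an instance where an unlucky choice of $n^*$ eliminates only $\phi_{sc}$ while a better position would eliminate all shortcuts. For example, take the three valid bijections $(0,1,2,3)$, $(1,0,2,3)$ and $(1,0,3,2)$ of some constraint set. Choosing $\phi_{sc} = (1,0,3,2)$ and $n^* = n_2$ removes only that mapping, so a second pin is needed. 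Pinning $n_0$ instead removes both shortcuts at once. This shows the algorithm need not minimize $|C' \setminus C|$.
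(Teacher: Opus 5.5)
Your proof is correct. For Parts 1--3 it follows the paper: the detected $\phi_{sc}$ violates the new pin, so $\Phi_{C_{i+1}}\subsetneq\Phi_{C_i}$; induction gives $\phi^*\in\Phi_{C_i}$; and counting down from $|\Phi_{C_0}|=k+1$ gives termination.

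You are more careful than the paper in three places. You state the inclusion $\Phi_{C_{i+1}}\subseteq\Phi_{C_i}$, which the paper uses silently. You note that $D_{sc}\neq\emptyset$. And you catch the case $t=k=T$, where Algorithm~\ref{alg:repair} actually exits through \texttt{TIMEOUT} with a $C'$ that is already shortcut-free. Your first resolution of that case is exactly what the paper's proof asserts (``returns $C_k$ with $SM(C_k)=0$''). Your alternative reading, which counts the final check as an iteration, would need $T\ge k+1$. One small slip: to see that the last verification call returns \texttt{SHORTCUT-FREE} when $SM=0$, you need that every element of a returned $\Psi$ is a genuine shortcut, i.e.\ Proposition~\ref{prop:asp_correctness}(3). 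The property that no shortcut is missed is the other direction.

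The genuine difference is Part~4. The paper only says the algorithm is greedy and points to the NP-hardness of library-based repair (Theorem~\ref{thm:complexity_repair}). That explains why no efficient procedure should be expected to be optimal. However, it concerns a different constraint language (multi-position library pins) and never exhibits a run of this algorithm that over-pins.

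Your instance does exhibit one, unconditionally and within the algorithm's own unary-pin language. With valid bijections $(0,1,2,3)$, $(1,0,2,3)$, $(1,0,3,2)$, pinning $n_0$ alone suffices. The admissible choices $\phi_{sc}=(1,0,3,2)$, $n^*=n_2$ instead force a second pin. To make it fully explicit, take $C=\{\phi(n_0)+\phi(n_1)=1,\ \phi(n_0)=0\Rightarrow\phi(n_2)=2\}$ over $S=\{0,1,2,3\}$; under bijectivity its valid mappings are exactly those three. Your first option, beating greedy with a multi-position pin, changes the constraint language, so the second option is the one to keep.
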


\begin{proof}
\emph{(1) Monotonic decrease:}

At iteration $i$, let $\phi_{sc} \in \Phi_{C_i}$ be the detected shortcut. The algorithm adds constraint $c_{new}: \phi(n^*) = \phi^*(n^*)$ where $\phi^*(n^*) \neq \phi_{sc}(n^*)$.

Clearly $\phi_{sc}$ violates $c_{new}$, so $\phi_{sc} \notin \Phi_{C_{i+1}}$.

Thus $\Phi_{C_{i+1}} \subset \Phi_{C_i}$ (strict subset), hence $|\Phi_{C_{i+1}}| < |\Phi_{C_i}|$, so $SM(C_{i+1}) < SM(C_i)$.

\emph{(2) Preserves intended:}

The constraint $c_{new}: \phi(n^*) = \phi^*(n^*)$ is satisfied by $\phi^*$ by construction. Thus if $\phi^* \in \Phi_{C_i}$, then $\phi^* \in \Phi_{C_{i+1}}$.

By induction: $\phi^* \in \Phi_{C_0}$ by assumption, hence $\phi^* \in \Phi_{C_i}$ for all $i$.

\emph{(3) Termination:}

Each iteration strictly decreases $|\Phi_{C_i}|$ by at least 1 (part 1).

Initially: $|\Phi_{C_0}| = SM(C) + 1 = k + 1 < \infty$

After iteration $i$: $|\Phi_{C_i}| \leq k + 1 - i$

To reach $|\Phi_{C_k}| = 1$, we need at most $k$ iterations.

By part (2), $\phi^* \in \Phi_{C_k}$, so $\Phi_{C_k} = \{\phi^*\}$, thus $SM(C_k) = 0$.

Since the timeout assumption $T \geq k$ ensures the algorithm does not exit before iteration $k$, and the decreasing chain terminates (part 1), Algorithm~\ref{alg:repair} returns $C_k$ with $SM(C_k)=0$.

\emph{(4) Suboptimality:}

The algorithm is greedy and therefore does not minimize $|C'\setminus C|$.
Moreover, the more general candidate-library repair problem is NP-hard (Theorem~\ref{thm:complexity_repair}), indicating that optimal repair is computationally difficult in general.
\end{proof}

\subsection{Application to Existing Frameworks}
The verification and repair workflow applies directly to NeSy frameworks such as DeepProbLog, NeurASP, and Logic Tensor Networks.
First, export the logical rules as ASP constraints, define $\phi^*$ and the 
neural output/concept sets with $|N| = |S|$, then run Algorithm~\ref{alg:asp_verification} to check shortcut-freeness. 
If shortcuts are detected, Algorithm~\ref{alg:repair} eliminates them with convergence guarantees, or uncertainty sampling (Section~\ref{sec:practical_guide}) can resolve ambiguity via labeled examples (see supplementary material for a step-by-step guide).

%%%%%%%%%%%%%%%%%%%%%%%%%%%%%%%%%%%%%%%%%%%%%%%%%%%%%%%%%%%%%%%%%%%%%%%%%%%%%%%%%%%%

\subsection{Complexity Results}\label{sec:complexity}

Given a constraint-based NSL problem with intended mapping \(\phi^*\):
(i) deciding if the problem is shortcut-free is coNP-complete, 
(ii) counting shortcuts is \#P-complete, and
(iii) finding minimal repair is NP-hard.

\begin{theorem}[Shortcut-Free verification is coNP-complete]
\label{thm:complexity_conp}
    Given a constraint-based NSL problem $\mathcal{P} = (N, S, C, \phi^*, D)$ with $|N| = |S|$, deciding if $SM(C) = 0$ (given $\phi^*$) is \emph{coNP-complete}.
\end{theorem}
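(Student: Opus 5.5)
The proof has two parts: membership in coNP, and coNP-hardness via a reduction from the complement of a known NP-complete problem. Throughout, I assume constraints in $C$ are given explicitly, for instance as tables of allowed tuples or as polynomial-time checkable predicates such as the arithmetic integrity constraints of the ASP encoding. Then checking whether a given $\phi$ satisfies $C$ takes polynomial time. As in the standing assumption, I also assume the instance is given with $\phi^* \in \Phi_C$, or that this is checked first in polynomial time as in Step~1 of Algorithm~\ref{alg:asp_verification}.

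\emph{Membership.} The complement problem asks whether $SM(C) > 0$. By Proposition~\ref{prop:measures_relationship}(1), together with $\phi^* \in \Phi_C$, this is equivalent to the existence of a bijection $\phi \neq \phi^*$ with $\phi \in \Phi_C$. Such a $\phi$ is a certificate of size $O(|N|\log|S|)$. A verifier checks in polynomial time that $\phi$ is bijective, that $\phi \neq \phi^*$, and that it satisfies every $c \in C$. This is exactly the search performed in Step~2 of Algorithm~\ref{alg:asp_verification}, whose correctness is Proposition~\ref{prop:asp_correctness}. So the complement is in NP, and shortcut-freeness is in coNP.

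\emph{Hardness.} I will reduce from the NP-complete problem ``Another Solution'' for a constraint class that is ASP-complete under parsimonious-style reductions. A natural choice is the bijective setting itself, reducing from Hamiltonian cycle with one cycle already given, but finding-another-Hamiltonian-cycle is delicate. The cleaner route is to reduce directly from SAT while embedding variables into a bijection. Given a CNF $F$ over $x_1,\dots,x_p$, I will build $N = \{a_i, b_i : i \le p\} \cup \{z_1, z_2\}$ and $S = \{t_i, f_i : i \le p\} \cup \{u_1, u_2\}$. Constraints will force $\{\phi(a_i),\phi(b_i)\} = \{t_i,f_i\}$, where $\phi(a_i)=t_i$ encodes $x_i$ true, and $\{\phi(z_1),\phi(z_2)\} = \{u_1,u_2\}$. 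Bijectivity then holds automatically. Each clause $\kappa$ becomes a constraint on the $a_i$ of its variables and on $z_1$: it is satisfied if $\phi(z_1)=u_1$, or if the assignment read off from the $a_i$ satisfies $\kappa$. I then set $\phi^*$ to map $z_1 \mapsto u_1$, with an arbitrary fixed assignment on the pairs, for instance all $a_i \mapsto t_i$. By construction $\phi^* \in \Phi_C$.

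For correctness, I need to show that $F$ is unsatisfiable iff $SM(C)=0$. If $F$ has a satisfying assignment, the mapping with $z_1 \mapsto u_2$ encoding that assignment is a valid bijection different from $\phi^*$, so a shortcut exists. The converse is the main obstacle. With $z_1 \mapsto u_1$, every assignment of the pairs is valid, so there are spurious shortcuts even when $F$ is unsatisfiable. To fix this, I will pin the variable gadgets whenever $z_1 \mapsto u_1$: add, for each $i$, the binary constraint ``$\phi(z_1)=u_1 \Rightarrow \phi(a_i)=t_i$''. Then the only valid bijection with $z_1\mapsto u_1$ is $\phi^*$. Any other valid bijection must have $z_1\mapsto u_2$ and hence encode a satisfying assignment of $F$. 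So $SM(C)=0$ iff $F$ is unsatisfiable, and all constraints have arity at most $\max|\kappa|+1$, making the construction polynomial. The steps I would double-check are that every constraint is expressible in the paper's constraint language (these are Boolean/ASP integrity constraints over value atoms, so that should be fine) and that the bijectivity restriction introduces no extra solutions. Bijectivity is enforced by the pair gadgets, so it should not.
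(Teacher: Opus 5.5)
Your proposal is correct and follows the paper's proof: membership uses the same certificate (a valid bijection $\phi \neq \phi^*$), and hardness uses essentially the same SAT gadget. Your switch pair $(z_1,z_2)$ plays the role of the paper's fresh variable $y$ with unit clause $\neg y$, so in both constructions the valid set is exactly $\{\phi^*\}$ together with one mapping per satisfying assignment, each with the switch flipped. The only difference is in how the constraints are written: you guard each clause by $\phi(z_1)=u_2$ and add pinning implications, instead of the paper's derived atoms $\mathit{alt}$ and $\mathit{satisfied}(D_j)$, which makes every constraint a bounded-arity constraint directly over $\phi$.
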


\begin{proof} (Proof sketch)
\emph{Membership in coNP:} a certificate that $SM(C)>0$ is any valid mapping $\phi \neq \phi^*$, verifiable in polynomial time.
\emph{coNP-Hardness:} we reduce UNSAT to $SM(C)=0$.
Given a CNF formula $\psi$ over Boolean variables $\{x_1,\ldots,x_m\}$, construct a constraint-based NSL problem with $2m$ neural outputs (one per literal) and $2m$ concepts (paired truth values $T_i,F_i$).
Add constraints enforcing neural outputs $\{n_i, \bar{n}_i\}$ must map to concepts from $\{T_i, F_i\}$. 
The intended mapping $\phi^*$ encodes the all-true assignment over $\{x_1,\ldots,x_m,y\}$ for an auxiliary formula $\psi' = \psi \wedge \neg y$ with fresh variable $y$, ensuring $\phi^*$ is never a satisfying assignment of $\psi'$.
Constraints use an auxiliary atom $alt$ that activates iff $\phi \neq \phi^*$.
When active, clause satisfaction constraints enforce that the encoded assignment satisfies $\psi$.
By construction, $\phi^* \in \Phi_C$, and any $\phi \neq \phi^*$ is valid iff it encodes a satisfying assignment of $\psi'$. 
Since $\#\mathrm{SAT}(\psi') = \#\mathrm{SAT}(\psi)$ and $\psi'$ is satisfiable iff $\psi$ is, the reduction preserves both UNSAT and counting.
Thus $SM(C)=0$ iff $\psi$ is unsatisfiable.
\end{proof}

In this reduction, we do \emph{not} exclude $\phi^*$ from the constraint encoding to ensure $\Phi_C \neq \varnothing$. 
This differs from the algorithmic enumeration (Algorithm~\ref{alg:asp_verification}) where we \emph{do} exclude $\phi^*$ to enumerate alternative shortcut solutions.

\begin{theorem}[Shortcut Counting is \#P-complete]
\label{thm:complexity_sharpp}
    Given a constraint-based NSL problem $\mathcal{P} = (N, S, C, \phi^*, D)$ with $|N| = |S|$, computing $SM(C) = |\Phi_C| - 1$ is \emph{\#P-complete}.
\end{theorem}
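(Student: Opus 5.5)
The proof splits into membership in \#P and \#P-hardness, and the hardness half reuses the reduction built for Theorem~\ref{thm:complexity_conp}.

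\emph{Membership.} We exhibit a nondeterministic polynomial-time machine whose number of accepting paths equals $|\Phi_C| - 1$. The machine guesses a mapping $\phi: N \to S$, encoded in $m\lceil\log_2 r\rceil$ bits. It then checks deterministically that
\begin{itemize}
\item $\phi$ is a bijection,
\item $\phi$ satisfies every constraint in $C$, and
\item $\phi \neq \phi^*$.
\end{itemize}
It accepts exactly when all three checks pass. Each check is polynomial, under the standing assumption that each constraint in $C$ can be evaluated in polynomial time on a given assignment. This is the same assumption that made the coNP certificate in Theorem~\ref{thm:complexity_conp} verifiable. Guesses that do not encode a valid element of $S^N$ are rejected. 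Hence the count is exactly $SM(C)$, and computing $SM(C)$ lies in \#P.

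\emph{Hardness.} We reduce \#SAT, which is \#P-complete, by a parsimonious reduction (up to an additive constant). Given a CNF $\psi$ over $x_1,\ldots,x_m$, apply the construction from the proof of Theorem~\ref{thm:complexity_conp}:
\begin{itemize}
\item literal outputs $n_i, \bar n_i$ with paired concepts $T_i, F_i$;
\item the auxiliary formula $\psi' = \psi \wedge \neg y$;
\item the activation atom $alt$;
\item the all-true intended mapping $\phi^*$.
\end{itemize}
The key property, already asserted there, is that $\phi^* \in \Phi_C$ and that every $\phi \neq \phi^*$ is valid iff it encodes a satisfying assignment of $\psi'$. We then argue that this correspondence is a bijection between $\Phi_C \setminus \{\phi^*\}$ and the satisfying assignments of $\psi'$. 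Because each pair $\{n_i,\bar n_i\}$ must be mapped bijectively onto $\{T_i,F_i\}$, the bijection is fixed by the single bit $x_i$ (whether $n_i \mapsto T_i$). So distinct mappings give distinct assignments, and every assignment arises from exactly one mapping. Moreover, $\phi^*$ encodes $y=\mathrm{true}$, so it never satisfies $\psi'$. Therefore $SM(C) = \#\mathrm{SAT}(\psi') = \#\mathrm{SAT}(\psi)$: the last equality holds because $\neg y$ fixes $y$. One oracle call for $SM(C)$ thus returns $\#\mathrm{SAT}(\psi)$ directly, and the reduction is polynomial because it adds $2(m+1)$ outputs and clause-sized constraints.

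\emph{Main obstacle.} The delicate step is verifying parsimony rigorously, i.e., that the encoding introduces no spurious or duplicated valid mappings. Two points need care. First, the $alt$ gadget must create no extra multiplicity: $alt$ should be a defined (derived) atom, not a free choice, so that it contributes no additional solutions. Second, cross-pair bijections must be ruled out, so the pairing constraints must forbid $n_i$ from taking a concept $T_j$ or $F_j$ with $j\neq i$. I would make both explicit by writing the constraints as: $\phi(n_i)\in\{T_i,F_i\}$ and $\phi(\bar n_i)\in\{T_i,F_i\}$; $alt$ holds iff some $\phi(n_i)=F_i$; and if $alt$ holds, then each clause has a literal whose output maps to $T$. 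Since $alt$ is determined by $\phi$, the solution count is over $\phi$ alone, which yields the exact count identity.
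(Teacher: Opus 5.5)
Your proposal is correct and follows the paper's proof essentially step for step. Membership is shown by a polynomial-time verifier that counts valid bijections distinct from $\phi^*$, which equals $SM(C)$ under the standing assumption $\phi^* \in \Phi_C$. Hardness reuses the coNP reduction with $\psi' = \psi \wedge \neg y$, giving $SM(C) = \#\mathrm{SAT}(\psi') = \#\mathrm{SAT}(\psi)$. Your explicit parsimony checks spell out what the paper only asserts, without changing the argument: $\mathit{alt}$ is derived rather than guessed, pair-consistency rules out cross-pair assignments, and $\phi^*$ sets $y$ true so it is never counted as a satisfying assignment.
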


\begin{proof} (Proof sketch)
\emph{Membership in \#P:} candidate mappings are verifiable in polynomial-time, so counting them is in \#P. 
\emph{\#P-Hardness:} using the same reduction as Theorem \ref{thm:complexity_conp}, each $\phi \neq \phi^*$ in $\Phi_C$ corresponds bijectively to a satisfying assignment of $\psi'$, so $SM(C)=\#\mathrm{SAT}(\psi')=\#\mathrm{SAT}(\psi)$.
Since \#SAT is \#P-complete \cite{valiant1979complexity}, counting $SM(C)$ is \#P-hard.
\end{proof}

\begin{theorem}[Minimal Repair over Candidate Pinning Constraints is NP-hard]
\label{thm:complexity_repair}
    Given a constraint-based NSL problem $\mathcal{P} = (N, S, C, \phi^*, D)$ and a library $\mathcal{L}$ of candidate pinning constraints, finding the smallest subset of $\mathcal{L}$ whose addition to $C$ achieves $SM^{\mathrm{all}}(C') = 0$ is \emph{NP-hard}.
\end{theorem}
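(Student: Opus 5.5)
We would prove NP-hardness by a polynomial reduction from \textsc{Set Cover} (equivalently \textsc{Hitting Set}), which matches the structure of the problem. Each candidate pinning constraint in $\mathcal{L}$ eliminates a \emph{set} of shortcuts, namely all $\phi \in \Phi_C$ that disagree with $\phi^*$ on at least one pinned position. A subset $L' \subseteq \mathcal{L}$ achieves $SM^{\mathrm{all}}(C \cup L') = 0$ exactly when every shortcut is eliminated by some member of $L'$, so minimal repair is a covering problem. The reduction has to make this covering instance an arbitrary one. We would state the decision version (``is there $L'$ with $|L'| \le b$?''), since hardness of the decision version implies hardness of the optimization version.

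Let a \textsc{Set Cover} instance consist of a universe $U = \{u_1,\dots,u_p\}$, sets $T_1,\dots,T_q \subseteq U$, and a budget $b$. We construct $N$ with one neural output $n_j$ per set $T_j$, and work in $\Phi_C^{\mathrm{all}}$, so bijectivity is not needed. Each $n_j$ gets domain $\{0,1\}$ via unary constraints, and we take $\phi^* \equiv 0$. For each element $u_\ell$ we want exactly one shortcut $\phi_\ell$, defined by $\phi_\ell(n_j) = 1$ iff $u_\ell \in T_j$. The constraint set $C$ is then the explicit disjunctive constraint ``$\phi \in \{\phi^*, \phi_1, \dots, \phi_p\}$''. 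This constraint can be written in polynomial size as an ASP program with one auxiliary choice among $p+1$ patterns, or as a table constraint with $p+1$ rows; both are allowed because $C$ is an arbitrary finite constraint set (Definition~\ref{def:nesy_problem}). For each $j$, the library $\mathcal{L}$ contains the single pin ``$\phi(n_j) = 0$''.

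Correctness follows in two directions. Adding the pin at $n_j$ removes exactly the $\phi_\ell$ with $u_\ell \in T_j$, and it always keeps $\phi^*$, as argued in part (2) of Theorem~\ref{thm:repair_correctness}. Hence $L'$ makes $\Phi_{C\cup L'} = \{\phi^*\}$ iff the corresponding sets cover $U$, and the two instances have equal optimum sizes. If some element lies in no set, then its $\phi_\ell$ is the all-zero pattern and coincides with $\phi^*$. We handle this degenerate case by assuming, without loss of generality, that every element belongs to some set; such elements can also be trivially detected.

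The main obstacle is encoding $C$ faithfully and compactly. It must admit exactly the $p+1$ intended mappings, and we must avoid distinct elements with identical membership patterns collapsing into one shortcut. Collapsing is harmless, since identical patterns are covered by exactly the same sets, but we would note it explicitly. If a purely $|N|=|S|$ bijective variant were desired, one could instead pad each $n_j$ with a partner output and let the swap of the pair play the role of the bit $1$, with the Boolean reduction used for Theorem~\ref{thm:complexity_conp}. The statement, however, is about $SM^{\mathrm{all}}$, so the direct Boolean construction suffices.
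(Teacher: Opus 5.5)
Your reduction is correct, but it runs dual to the paper's. The paper also reduces from \textsc{Set Cover}, but it makes neural outputs correspond to \emph{universe elements}. It takes $C=\varnothing$ over $\Phi_C^{\mathrm{all}}$ (so $|\Phi_C^{\mathrm{all}}|=n^n$) and the identity as $\phi^*$. For each set $S_j$ it adds one candidate constraint pinning \emph{all} positions $n_i$ with $u_i\in S_j$. Since any unpinned position can deviate freely, $SM^{\mathrm{all}}(C')=0$ iff every position is pinned, i.e.\ iff the chosen sets cover $U$.

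You instead make outputs correspond to \emph{sets} and shortcuts to \emph{elements}, and you use only single-position pins. The combinatorics move into a table constraint admitting exactly $\phi^*$ and one shortcut per element. Minimal repair then becomes a minimum hitting set of the shortcuts' disagreement sets.

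The paper's version is simpler: no nontrivial $C$ is needed, and the equivalence is nearly immediate. However, it depends on multi-position pins; with a library of single pins its instance is trivial, since all $n$ pins are needed. Your version gives two things the paper's does not:
\begin{itemize}
\item Hardness already holds for single pins $\phi(n)=\phi^*(n)$, exactly the form Algorithm~\ref{alg:repair} adds. This supports the paper's remark about the non-optimality of greedy repair more directly.
\item Since $|\Phi_C|=p+1$, hardness persists even when the full list of shortcuts is small and explicitly available, whereas the paper's instance has $n^n$ valid mappings.
\end{itemize}

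One small point: the paper assumes $\phi^*$ is bijective, which $\phi^*\equiv 0$ violates. This is fixed by taking $S=\{0,\dots,q-1\}$ and $\phi^*(n_j)=j-1$, and letting $\phi_\ell$ replace $j-1$ by $j \bmod q$ exactly at the positions with $u_\ell\in T_j$ (for $q\ge 2$).
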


\begin{proof} (Proof sketch)
We reduce Set Cover to the minimal repair problem.
Given universe $U = \{u_1, \ldots, u_n\}$ and collection $\mathcal{S}$, construct a constraint-based NSL problem with $n$ neural outputs, identity intended mapping, and no initial constraints (so $|\Phi_C| = n^n$, treating mappings as arbitrary functions rather than bijections).
Each set $S_j \in \mathcal{S}$ corresponds to a candidate constraint that pins $\phi(n_i) = \phi^*(n_i)$ for all $u_i \in S_j$.
Achieving $SM^{\mathrm{all}}(C') = 0$ with $\leq \ell$ constraints requires pinning every position, which corresponds exactly to covering $U$ with $\leq \ell$ sets.
\end{proof}

This hardness result explains why Algorithm~\ref{alg:repair} cannot guarantee optimality (Theorem~\ref{thm:repair_correctness}, Part 4), since finding the minimum number of repair constraints would itself require solving an NP-hard problem.

\section{Experiments}\label{sec:experiments}

We validate Algorithm~\ref{alg:asp_verification} (verification) and Algorithm~\ref{alg:repair} (greedy repair) on 8 domains from the \textit{RSBench} benchmark suite~\cite{bortolottiNeurosymbolicBenchmarkSuite2024}, covering visual arithmetic (MNIST-Half, MNIST-XOR, MNIST-EvenOdd, MNIST-Math), visual logic (CLE4EVR, Kandinsky), and autonomous driving (BDD-OIA, SDD-OIA). 
Each domain defines a constraint-based NSL problem $\mathcal{P} = (N, S, C, \phi^*, D)$ with varying numbers of neural outputs ($|N|$ from 2 to 21), concepts, and constraint structures. 
All experiments use \textit{clingo} with \texttt{--models=10,000} for bounded enumeration, averaged over 10 runs.
In deployment, exact enumeration is often unnecessary: shortcut-freeness can be checked by searching for \textit{any} alternative mapping, and shortcut analysis can be approximated via bounded enumeration (e.g., \texttt{--models=M}), randomized restarts, or incremental verification (Section~\ref{sec:practical_guide}).
We report results under two encodings.
The \emph{base} encoding enforces only that each neural output maps to exactly one concept (i.e., $\phi$ is a function), but allows multiple outputs to share the same concept label; this corresponds to $\Phi_C^{\mathrm{all}}$ in our notation.
The \emph{bijective} encoding additionally requires that each concept is used exactly once, corresponding to $\Phi_C^{\mathrm{bij}}$.
Our theoretical results assume bijectivity, but we include the base encoding to show how much ambiguity bijectivity alone resolves.
Table~\ref{tab:results} summarizes the results.

The experiments paint an empirical \emph{difficulty spectrum} that reflects our complexity classification.
Verification (Algorithm~\ref{alg:asp_verification}) is practical: even on the largest domains (21 neural outputs), bounded enumeration completes in under a second. 
Repair difficulty, in contrast, varies sharply by domain: MNIST tasks repair in 1--2 iterations, BDD-OIA requires 21 pinning constraints, and SDD-OIA resists all strategies tested, which is consistent with the NP-hardness of minimal repair (Theorem~\ref{thm:complexity_repair}). 
The complementary success of greedy and random strategies (greedy wins on BDD-OIA, random wins on CLE4EVR) suggests that hybrid strategies are a promising direction for future work. 

The approximate shortcut counting method employed by RSBench~\cite{bortolottiNeurosymbolicBenchmarkSuite2024} enumerates reasoning shortcuts by generating all possible ground-truth concept vectors exhaustively and encoding the problem as SAT. 
This is tractable for small domains (e.g., MNIST-Half: $5^5 = 3{,}125$ vectors) but infeasible for larger ones, e.g., Kandinsky's 18 ternary concepts yield $3^{18} \approx 387$ million vectors. 
We instead formulate shortcut enumeration as an ASP problem over minimal constraint sets.
For MNIST domains, arithmetic constraints directly encode the task rules without requiring training samples.
For non-MNIST domains (marked $^*$ in Tables~\ref{tab:results} and \ref{tab:repair}), we use 2--8 synthetic samples targeting distinct aspects of each classification rule; in Kandinsky we reduced the concept domain from 18 to 8 (details in supplementary material).
Since $C \subseteq C'$ implies $\Phi_{C'} \subseteq \Phi_C$ (monotonicity), our counts are upper bounds: $SM(C) = 0$ under minimal constraints guarantees shortcut-freeness under any superset, including the full training set.

\begin{table}[ht]
\centering
\caption{Reasoning shortcut detection: number of shortcuts and analysis time.
$^*$: Minimal synthetic samples,  $^\dagger$: Reduced concept domain (from 18 to 8), 
N: number of neural outputs, Bij.: bijective encoding, Time represents average over 10 runs, \textbf{0} indicates \texttt{UNSAT} (no shortcuts exist).}
\label{tab:results}
\begin{tabular}{@{}lrrccc@{}}
\toprule
\multirow{2}{*}{Task} & \multirow{2}{*}{N} & \multicolumn{2}{c}{Shortcuts} & \multicolumn{2}{c}{Time (ms)} \\
\cmidrule(lr){3-4} \cmidrule(lr){5-6}
 &  &  Base & Bij. & Base & Bij. \\
\midrule
MNIST-XOR & 2 & 1 & 1 & 353.3 & 301.7 \\
MNIST-Half & 5 & 2 & \textbf{{0}} & 291.2 & 290.7 \\
MNIST-EvenOdd & 10 & 35 & 1 & 299.9 & 299.4 \\
MNIST-Math & 4 & 4 & \textbf{{0}} & 302.5 & 301.4 \\
BDD-OIA$^*$ & 21 & $\ge10^4$ & $\ge10^4$ & 528.4 & 521.0 \\
SDD-OIA$^*$ & 21 & $\ge10^4$ & $\ge10^4$ & 527.5 & 523.2 \\
CLE4EVR$^*$ & 8 & $\ge10^4$ & 5,759 & 471.0 & 421.1 \\
Kandinsky$^*\dagger$ & 8 & $\ge10^4$ & 3,455 & 470.7 & 377.3 \\
\bottomrule
\end{tabular}
\end{table}

Algorithm~\ref{alg:asp_verification} successfully detects shortcuts across all 8 domains and fully enumerates valid mappings whenever the number of alternatives is below the 10,000-model cap; for the 4 domains where the count saturates at $\geq 10^4$, the reported value is a lower bound rather than an exhaustive count.
Enforcing bijectivity eliminates all shortcuts in 2 of 8 domains ($SM^{\mathrm{bij}} = 0$). 
This also shows that bijectivity combined with sufficiently strong constraints can achieve uniqueness, but is not universally sufficient.

\begin{table}[ht]
\centering
\caption{Repair comparison (base encoding): greedy vs.\ random repair.
I.: iterations, C.: constraints added, S.: success rate (\%), 
``--'' indicates failure to repair within $T = 100$ iterations.}
\label{tab:repair}
\begin{tabular}{@{}l ccc ccc@{}}
\toprule
\multirow{2}{*}{Task} & \multicolumn{3}{c}{Greedy Repair} & \multicolumn{3}{c}{Random Repair} \\
\cmidrule(lr){2-4} \cmidrule(lr){5-7}
 & I. & C. & S. & I. & C. & S. \\
\midrule
MNIST-XOR & 2.0 & 1.0 & 100\% & 2.0 & 1.0 & 100\% \\
MNIST-Half & 2.0 & 1.0 & 100\% & 2.0 & 1.0 & 100\% \\
MNIST-EvenOdd & 3.0 & 2.0 & 100\% & 3.0 & 2.0 & 100\% \\
MNIST-Math & 2.0 & 1.0 & 100\% & 2.0 & 1.0 & 100\% \\
BDD-OIA$^*$ & 22.0 & 21.0 & 100\% & -- & -- & 0\% \\
SDD-OIA$^*$ & -- & -- & 0\% & -- & -- & 0\% \\
CLE4EVR$^*$ & -- & -- & 0\% & 8.1 & 7.1 & 80\% \\
Kandinsky$^*\dagger$ & 9.0 & 8.0 & 100\% & 9.0 & 8.0 & 100\% \\
\bottomrule
\end{tabular}
\end{table}

Table~\ref{tab:repair} compares greedy and random repair on the base (non-bijective) encoding.
Greedy repair (Algorithm~\ref{alg:repair}) achieves full shortcut elimination in 6 of 8 domains, always terminating within the $k$-iteration bound of Theorem~\ref{thm:repair_correctness}.
The two strategies show no statistically significant difference overall, though they exhibit complementary strengths: greedy repair succeeds on BDD-OIA where random fails, while random repair succeeds on CLE4EVR where greedy does not.
SDD-OIA alone resists all strategies, retaining \(\geq 10^4\) shortcuts even with bijectivity enforced.
Its 21 neural outputs with weak binary constraints yield a nearly unconstrained solution space where single-position pinning eliminates too few mappings per iteration, which is consistent with the NP-hardness of minimal repair (Theorem~\ref{thm:complexity_repair}).

%%%%%%%%%%%%%%%%%%%%%%%%%%%%%%%%%%%%%%%%%%%%%%%%%%%%%%%%%%%%%%%%%%%%%%%%%%%%%%%%%%%%%%%%%%%%%%%%%%%

\section{Conclusion}\label{sec:conclusion}

We formalized reasoning shortcuts in neurosymbolic learning as a constraint satisfaction problem and established both theoretical foundations and practical tools.
On the theoretical side, we proved that discrimination is necessary for shortcut-freeness under bijective mappings (Theorem~\ref{thm:discrimination_necessary}), but demonstrated via explicit counterexample that it is insufficient even when the constraint graph is connected (Example~\ref{cex:discrimination_fails}).
We provided a complexity classification: deciding shortcut-freeness is coNP-complete, counting shortcuts is \#P-complete, and minimal repair is NP-hard (Theorems~\ref{thm:complexity_conp}--\ref{thm:complexity_repair}).
Sample complexity for disambiguation lies between $\lceil \log_r(k+1) \rceil$ and $|\Delta_C|$ label queries (Theorem~\ref{thm:sample_complexity}).
Our analysis applies to the constraint-based class of NeSy systems captured by Definition~\ref{def:nesy_problem}; characterizing analogous shortcut phenomena in systems without explicit logical constraints remains an interesting question for future work.

On the practical side, our ASP-based verification algorithm correctly detects all bijective shortcuts (Proposition~\ref{prop:asp_correctness}), and greedy repair converges in at most $k$ iterations (Theorem~\ref{thm:repair_correctness}).
Experiments across 8 RSBench domains validate both algorithms: bijectivity alone eliminates shortcuts in 2 of 8 domains, and greedy and random repair exhibit complementary strengths on the remainder, jointly resolving 7 of 8 domains.

Several directions remain open.
Most pressing is a complete characterization of sufficient conditions for uniqueness beyond trivial automorphism groups: formalizing when constraint connectivity and strength jointly guarantee shortcut-freeness.
Our bijective restriction ($|N| = |S|$, each concept used exactly once), while natural for balanced classification, excludes many practical scenarios; extending to non-bijective mappings and developing approximation algorithms for minimal repair are natural next steps.

\section*{Acknowledgements}
This work has been supported by JSPS KAKENHI Grant Number JP25K03190 and JST CREST Grant Number JPMJCR22D3.

\section*{AI Declaration}
The authors used a large language model (Claude) for improving the clarity of the writing. 
All technical content was developed and verified by the authors.

%% The file kr.bst is a bibliography style file for BibTeX 0.99c
\bibliographystyle{kr}
\bibliography{ref}

%%%%%%%%%%%%%%%%%%%%%%%%%%%%%%%%%%%%%%%%%%%%%%%%%%%%%%%%%%%%%%%%%%%%%%%%%%%%%%%%%%%%%%%%%%%%%%%%%%%
%% Supplementary Material (was appendix)
%%%%%%%%%%%%%%%%%%%%%%%%%%%%%%%%%%%%%%%%%%%%%%%%%%%%%%%%%%%%%%%%%%%%%%%%%%%%%%%%%%%%%%%%%%%%%%%%%%%

\clearpage
\appendix

\section{Worked Examples: Repair and Bijectivity}

\begin{example}[Repair Demonstration]
\label{ex:repair_demo}
We demonstrate greedy shortcut repair (Algorithm~\ref{alg:repair}) on the 4-node problem (Example~\ref{ex:simple_addition_intro}).

\emph{Initial State:}
\begin{itemize}
    \item Constraints: $C = \{C_1: \phi(n_0)+\phi(n_3)=3, C_2: \phi(n_1)+\phi(n_2)=3\}$
    \item Intended: $\phi^* = (0,1,2,3)$  
    \item Verification (Algorithm~\ref{alg:asp_verification}): $SM(C) = 7$ shortcuts
\end{itemize}

\emph{Detected shortcuts include:}
\begin{itemize}
    \item $\phi_2 = (0,2,1,3)$ $\quad$ swaps component 2
    \item $\phi_7 = (3,1,2,0)$ $\quad$ swaps component 1
    \item ... (5 more)
\end{itemize}

\emph{Running Algorithm~\ref{alg:repair} (greedy shortcut repair):}

\emph{Iteration 1:} Detect $\phi_2 = (0,2,1,3)$
\begin{itemize}
    \item Disagreement: $D_{sc} = \{n_1, n_2\}$ (since $\phi^*(n_1)=1 \neq 2=\phi_2(n_1)$)
    \item Add: $c_3: \phi(n_1) = 1$
    \item This forces $\phi(n_2) = 2$ (from $C_2: \phi(n_1)+\phi(n_2)=3$)
    \item Component 2 is now fully determined: $\{n_1 \mapsto 1, n_2 \mapsto 2\}$
    \item Component 1 remains free: $\{n_0, n_3\}$ can be $(0,3)$ or $(3,0)$
    \item New: $SM(C') = 1$ (2 valid bijections remain: $\phi^*$ and $\phi_7=(3,1,2,0)$)
\end{itemize}

\emph{Iteration 2:} Detect $\phi_7 = (3,1,2,0)$  
\begin{itemize}
    \item Disagreement: $D_{sc} = \{n_0, n_3\}$
    \item Add: $c_4: \phi(n_0) = 0$
    \item This forces $\phi(n_3) = 3$ (from $C_1$)
    \item Component 1 is now fully determined: $\{n_0 \mapsto 0, n_3 \mapsto 3\}$
    \item New: $SM(C'') = 0$ (shortcut-free)
\end{itemize}

Converged in 2 iterations ($\leq 7$ as in Theorem~\ref{thm:repair_correctness}).
\end{example}

\begin{example}[MNIST-Half: Bijectivity Ensures Uniqueness]
\label{ex:mnist_half_bijectivity_effect}
We demonstrate that the MNIST-Half constraints from Example~\ref{ex:mnist_half_nesy_problem}, when combined with bijectivity enforcement, actually achieve uniqueness.

\emph{Setup:}
\begin{itemize}
    \item Constraints: $C = \{C_1, C_2, C_3, C_4\}$ (as in Example~\ref{ex:mnist_half_nesy_problem})
    \item Intended: $\phi^* = (0, 1, 2, 3, 4)$
\end{itemize}

\emph{Analysis WITHOUT bijectivity:}

From $C_1, C_2$: $\phi(n_0) = 0, \phi(n_1) = 1$ (forced).

For $\{n_2, n_3, n_4\}$, allowing repetition:
\begin{itemize}
    \item $\phi^* = (0, 1, 2, 3, 4)$ --- bijective (valid)
    \item $\phi_1 = (0, 1, 3, 2, 3)$ --- overloads 3: $C_3: 3+2=5$, $C_4: 3+3=6$ (valid)
    \item $\phi_2 = (0, 1, 4, 1, 2)$ --- overloads 1: $C_3: 4+1=5$, $C_4: 4+2=6$ (valid)
\end{itemize}

Without bijectivity: $SM^{\mathrm{all}}(C) \geq 2$ (multiple non-bijective shortcuts).

\emph{Analysis WITH bijectivity (Algorithm~\ref{alg:asp_verification} enforcement):}

Must use $\{0,1,2,3,4\}$ exactly once. With $\phi(n_0)=0, \phi(n_1)=1$ forced, $\{n_2, n_3, n_4\}$ must partition $\{2,3,4\}$.

For $C_3$: $\phi(n_2) + \phi(n_3) = 5$ from $\{2,3,4\}$:
\begin{itemize}
    \item Only possibility: $\{\phi(n_2), \phi(n_3)\} = \{2, 3\}$ (since $4+1=5$ but $1 \notin \{2,3,4\}$)
\end{itemize}

For $C_4$: $\phi(n_2) + \phi(n_4) = 6$:
\begin{itemize}
    \item If $\phi(n_2) = 2$: then $\phi(n_4) = 4$ and $\phi(n_3) = 3$ (from $C_3$), which recovers $\phi^*$
    \item If $\phi(n_2) = 3$: then $\phi(n_4) = 3$, violating bijectivity
\end{itemize}

With bijectivity: $|\Phi_C^{\mathrm{bij}}| = 1$, $SM^{\mathrm{bij}}(C) = 0$, and it becomes shortcut-free.

In MNIST-Half, bijectivity + 4 arithmetic constraints suffice for uniqueness. 
However, bijectivity doesn't always eliminate shortcuts, see Example~\ref{ex:simple_addition_intro} where 8 bijective mappings exist, out of which 7 are shortcuts, even with bijectivity enforced. 
The difference is constraint strength: MNIST-Half has coupled arithmetic constraints linking all neural outputs in component 2, while Example~\ref{ex:simple_addition_intro} has weak sum constraints allowing independent permutations within disconnected components. 
This demonstrates that disconnection alone doesn't guarantee shortcuts when constraints are sufficiently restrictive.
\end{example}

%%%%%%%%%%%%%%%%%%%%%%%%%%%%%%%%%%%%%%%%%%%%%%%%%%%%%%%%%%%%%%%%%%%%%%%%

\section{Non-Bijective vs. Bijective Shortcuts}

\begin{example}[Bijectivity Eliminates Overloading, Not All Shortcuts]
\label{ex:bijectivity_comparison}
We compare two scenarios to show that bijectivity can eliminate overloading shortcuts, but is not sufficient for shortcut-freeness.

\emph{Scenario 1: Bijectivity Eliminates Shortcuts (MNIST-Half)}

From Example~\ref{ex:mnist_half_bijectivity_effect}: With 4 constraints and bijectivity, $SM^{\mathrm{bij}}(C) = 0$ (unique). 
Without bijectivity, $SM^{\mathrm{all}}(C) \geq 2$ (overloading allows shortcuts).

\emph{Scenario 2: Bijectivity is Insufficient (4-Node Example)}

From Example~\ref{ex:simple_addition_intro}: With 2 constraints and bijectivity, $SM^{\mathrm{bij}}(C) = 7$ (8 valid bijections). 
All shortcuts are bijective, and the disconnected constraint graph allows independent permutations.

Therefore, bijectivity can help eliminate certain non-bijective shortcuts, but does not by itself guarantee uniqueness.
Whether bijectivity suffices for a given constraint set depends on constraint strength: MNIST-Half achieves uniqueness despite disconnected components because its arithmetic constraints, together with bijectivity, fully determine each component; the 4-node example leaves independent transpositions unresolved.

For systems with disconnected constraint graphs, additional constraints or labeled examples are needed (Algorithm~\ref{alg:repair}, Section~\ref{sec:practical_guide}).
\end{example}

%%%%%%%%%%%%%%%%%%%%%%%%%%%%%%%%%%%%%%%%%%%%%%%%%%%%%%%%%%%%%%%%%%%%%%%%

\section{Worked Example: Active Learning}
\begin{example}[Uncertainty Sampling on 4-Node Problem]
\label{ex:uncertainty_sampling_demo}
Apply uncertainty sampling (Section~\ref{sec:practical_guide}) to the 4-node addition problem (Example~\ref{ex:simple_addition_intro}) with $SM(C) = 7$ shortcuts.

\emph{Initial State:} $|\Psi| = 8$ candidate bijections:
\begin{itemize}
    \item $\phi_1 = (0,1,2,3)$, $\phi_2 = (0,2,1,3)$ (component 1: $(0,3)$)
    \item $\phi_3 = (1,0,3,2)$, $\phi_4 = (1,3,0,2)$ (component 1: $(1,2)$)
    \item $\phi_5 = (2,0,3,1)$, $\phi_6 = (2,3,0,1)$ (component 1: $(2,1)$)
    \item $\phi_7 = (3,1,2,0)$, $\phi_8 = (3,2,1,0)$ (component 1: $(3,0)$)
\end{itemize}

\emph{Step 1: Compute disagreement}
\begin{itemize}
    \item $n_0$: Candidates use $\{0, 1, 2, 3\} \rightarrow d(n_0) = 4$
    \item $n_1$: Candidates use $\{0, 1, 2, 3\} \rightarrow d(n_1) = 4$
    \item $n_2$: Candidates use $\{0, 1, 2, 3\} \rightarrow d(n_2) = 4$
    \item $n_3$: Candidates use $\{0, 1, 2, 3\} \rightarrow d(n_3) = 4$
\end{itemize}

All positions have maximum disagreement. Select $n^* = n_0$ (arbitrary tie-breaking).

\emph{Step 2: Query and eliminate}

Query label: $\phi^*(n_0) = 0$

Eliminate all $\phi \in \Psi$ with $\phi(n_0) \neq 0$: Remove $\{\phi_3, \phi_4, \phi_5, \phi_6, \phi_7, \phi_8\}$

New $|\Psi| = 2$: $\{\phi_1, \phi_2\}$ (both have $\phi(n_0) = 0, \phi(n_3) = 3$)

\emph{Step 3: Second query}

Compute disagreement:
\begin{itemize}
    \item $n_0$: All agree ($\phi(n_0) = 0$) $ \rightarrow d(n_0) = 1$
    \item $n_1$: Candidates use $\{1, 2\}$   $ \rightarrow d(n_1) = 2$
    \item $n_2$: Candidates use $\{1, 2\}$   $ \rightarrow d(n_2) = 2$
    \item $n_3$: All agree ($\phi(n_3) = 3$) $ \rightarrow d(n_3) = 1$
\end{itemize}

Select $n^* = n_1$ (highest disagreement, or $n_2$).

Query label: $\phi^*(n_1) = 1$

New $|\Psi| = 1$: $\{\phi_1 = \phi^*\}$ (identified)

Uncertainty sampling identified $\phi^*$ in 2 queries, matching the lower bound $\lceil \log_4(8) \rceil = 2$ from Theorem~\ref{thm:sample_complexity}. 
The favorable structure (component 1 choices fully determine component 2) enables this optimal performance.
\end{example}

\section{Concrete ASP Encoding}

\begin{lstlisting}[caption=Example ASP encoding for Example~\ref{ex:simple_addition_intro}]
% Domain declaration for safety
val(0..3).

% Neural outputs
neural(n0). neural(n1). neural(n2). neural(n3).

% Concepts derived from value domain
concept(V) :- val(V).

% Bijectivity: exactly one neural output per concept
1 { maps_to(N,S) : concept(S) } 1 :- neural(N).
1 { maps_to(N,S) : neural(N) } 1 :- concept(S).

% Value extraction with domain guard
value(N, V) :- maps_to(N, V), val(V).

% Constraints: encode arithmetic with domain guards
:- value(n0, V0), value(n3, V3), val(V0), val(V3), V0 + V3 != 3.
:- value(n1, V1), value(n2, V2), val(V1), val(V2), V1 + V2 != 3.

% Exclude intended mapping phi* = (0,1,2,3)
% (used in Algorithm 1 for enumeration)
:- maps_to(n0,0), maps_to(n1,1), 
   maps_to(n2,2), maps_to(n3,3).

#show maps_to/2.
% When executed with enumeration mode, this will output 7 mappings.
\end{lstlisting}

\begin{lstlisting}[caption=ASP encoding for MNIST-Half (Example~\ref{ex:mnist_half_nesy_problem}) without bijectivity constraint]
% Domain declaration
val(0..4).

% Neural outputs
neural(n0). neural(n1). neural(n2). neural(n3). neural(n4).

% Concepts
concept(V) :- val(V).

% Mapping (allowing non-bijective)
1 { maps_to(N,S) : concept(S) } 1 :- neural(N).

% MNIST-Half constraints
:- maps_to(n0, V0), V0 + V0 != 0.
:- maps_to(n0, V0), maps_to(n1, V1), V0 + V1 != 1.
:- maps_to(n2, V2), maps_to(n3, V3), V2 + V3 != 5.
:- maps_to(n2, V2), maps_to(n4, V4), V2 + V4 != 6.

% Exclude intended mapping phi* = (0,1,2,3,4)
:- maps_to(n0,0), maps_to(n1,1), maps_to(n2,2), 
   maps_to(n3,3), maps_to(n4,4).

#show maps_to/2.
% When executed with enumeration mode, this will output 2 mappings.
\end{lstlisting}

\begin{lstlisting}[caption=ASP encoding for MNIST-Half (Example~\ref{ex:mnist_half_nesy_problem}) with bijectivity constraint]
% Domain declaration
val(0..4).

% Neural outputs
neural(n0). neural(n1). neural(n2). neural(n3). neural(n4).

% Concepts
concept(V) :- val(V).

% Bijectivity: exactly one neural output per concept
1 { maps_to(N,S) : concept(S) } 1 :- neural(N).
1 { maps_to(N,S) : neural(N) } 1 :- concept(S).

% MNIST-Half constraints
:- maps_to(n0, V0), V0 + V0 != 0.
:- maps_to(n0, V0), maps_to(n1, V1), V0 + V1 != 1.
:- maps_to(n2, V2), maps_to(n3, V3), V2 + V3 != 5.
:- maps_to(n2, V2), maps_to(n4, V4), V2 + V4 != 6.

% Exclude intended mapping phi* = (0,1,2,3,4)
:- maps_to(n0,0), maps_to(n1,1), maps_to(n2,2), 
   maps_to(n3,3), maps_to(n4,4).

#show maps_to/2.
% UNSATISFIABLE
\end{lstlisting}

\begin{lstlisting}[caption=ASP encoding for Example~\ref{cex:discrimination_fails} (modulo successor)]
% Domain declaration
val(0..2).

% Neural outputs
neural(n0). neural(n1). neural(n2).

% Concepts
concept(V) :- val(V).

% Bijectivity
1 { maps_to(N,S) : concept(S) } 1 :- neural(N).
1 { maps_to(N,S) : neural(N) } 1 :- concept(S).

% Value extraction
value(N, V) :- maps_to(N, V), val(V).

% Modulo successor constraints
:- value(n0, V0), value(n1, V1), val(V0), val(V1), 
   V1 != (V0 + 1) \ 3.
:- value(n1, V1), value(n2, V2), val(V1), val(V2), 
   V2 != (V1 + 1) \ 3.

% Exclude intended mapping phi* = (0,1,2)
:- maps_to(n0,0), maps_to(n1,1), maps_to(n2,2).

#show maps_to/2.
% When executed with enumeration mode, this will output 2 mappings.
\end{lstlisting}

\section{Full Proofs}

\subsection{Shortcut-Freeness is coNP-complete}

\begin{proof}
\emph{Membership in coNP:} A certificate that $SM(C) > 0$ is a valid mapping $\phi \neq \phi^*$ with $\phi \in \Phi_C$. 

Verification: 
(i) Checking $\phi \neq \phi^*$ takes $O(r)$ time. 
(ii) Checking $\phi \in \Phi_C$ requires evaluating constraints, which takes $O(|C| \cdot \text{poly}(r))$ time.
Both are polynomial, so the complement problem is in NP, thus $SM(C) = 0$ is in coNP.

\emph{Hardness (coNP-hard).} We reduce \textsc{UNSAT} to deciding $SM(C)=0$ given $\phi^*$.

\emph{Reduction construction:}
Let $\psi$ be a CNF formula over Boolean variables $\{x_1,\ldots,x_m\}$.
We reduce from $\psi$ via an auxiliary formula $\psi' = \psi \wedge \neg y$, where $y$ is a fresh variable not appearing in $\psi$. 
Note that $\#\mathrm{SAT}(\psi') = \#\mathrm{SAT}(\psi)$, and $\psi'$ is satisfiable iff $\psi$ is satisfiable. The fresh variable ensures that the all-true assignment never satisfies $\psi'$, which is what makes the bijection between alternative mappings and satisfying assignments work in the construction below.

Construct a constraint-based NSL problem with:
\begin{itemize}
    \item \emph{Neural outputs} represent literals over the variables of $\psi'$: $N=\{n_i,\bar{n}_i: i\in[m]\} \cup \{n_y, \bar{n}_y\}$.
    \begin{itemize}
        \item $n_i$ represents positive literal $x_i$
        \item $\bar{n}_i$ represents negative literal $\neg x_i$
    \end{itemize}
    Thus $|N| = 2(m+1)$.
    
    \item \emph{Concepts} represent truth values: $S=\{T_i,F_i: i\in[m]\} \cup \{T_y, F_y\}$. 
    \begin{itemize}
        \item $T_i$ represents ``variable $x_i$ is true''
        \item $F_i$ represents ``variable $x_i$ is false''
        \item $T_i$ and $F_i$ are mutually exclusive truth values for variable $x_i$
    \end{itemize}
    Thus $|S| = 2(m+1)$.
    
    \item \emph{Intended mapping} corresponds to the all-true assignment over $\{x_1,\ldots,x_m,y\}$:
    \begin{align*}
        \phi^*(n_i) &= T_i, \quad \phi^*(\bar{n}_i) = F_i 
        \quad \text{for } i \in [m] \\
        \phi^*(n_y) &= T_y, \quad \phi^*(\bar{n}_y) = F_y
    \end{align*}
\end{itemize}

\emph{Interpretation:} 
Any admissible bijection $\phi: N \to S$, i.e., any bijection satisfying the pair-consistency constraints below, encodes a truth assignment over $\{x_1,\ldots,x_m,y\}$, where variable $v$ is true iff $\phi(n_v) = T_v$.

\emph{Constraint encoding:}
Bijectivity is enforced by constraints (each neural output maps to exactly one concept, each concept used exactly once). 
We additionally enforce pair-consistency: for each variable $v\in\{x_1,\ldots,x_m,y\}$, the outputs $n_v$ and $\bar n_v$ may only map to the concepts $T_v$ and $F_v$. 
Together with bijectivity, this ensures that exactly one of the following holds:
$\phi(n_v)=T_v,\phi(\bar n_v)=F_v$ or $\phi(n_v)=F_v,\phi(\bar n_v)=T_v$.
Thus every admissible bijection corresponds uniquely to a Boolean assignment.
To detect deviations from $\phi^*$, introduce auxiliary atom $\mathit{alt}$:
\begin{align*}
    & \text{For each variable } v \in \{x_1,\ldots,x_m,y\}: \\
    & \quad \mathit{alt} \leftarrow \mathit{maps\_to}(n_v,F_v) \\
    & \quad \mathit{alt} \leftarrow \mathit{maps\_to}(\bar{n}_v,T_v)
\end{align*}
Note that $\mathit{alt}$ is false iff $\phi = \phi^*$, and true otherwise.

For each clause $D_j$ of $\psi'$ (including the unit clause $\neg y$), encode clause satisfaction so that $\mathit{satisfied}(D_j)$ holds exactly when the encoded assignment satisfies $D_j$. 
For each literal $\ell$ appearing in $D_j$, add the rule:
\begin{align*}
    & \mathit{satisfied}(D_j) \leftarrow \mathit{maps\_to}(n_v,T_v) 
        \quad \text{if } \ell = x_v \\
    & \mathit{satisfied}(D_j) \leftarrow \mathit{maps\_to}(\bar{n}_v,T_v) 
        \quad \text{if } \ell = \neg x_v
\end{align*}
Both rules fire exactly when their literal is satisfied: $x_v$ is true iff $\phi(n_v) = T_v$, and $\neg x_v$ is true iff $\phi(\bar{n}_v) = T_v$ (equivalently, $\phi(n_v) = F_v$).

Finally, enforce that whenever $\mathit{alt}$ is true, every clause must be satisfied:
\begin{align*}
    \bot \leftarrow \mathit{alt},\, \mathit{not}\ \mathit{satisfied}(D_j) 
    \quad \text{for each clause } D_j \text{ of } \psi'.
\end{align*}

\emph{Translation:} ``If the mapping differs from $\phi^*$ ($\mathit{alt}$ is true), then every clause of $\psi'$ must be satisfied by the encoded assignment.''

\emph{Observation:} 
By construction, $\phi^* \in \Phi_C$: 
when $\phi = \phi^*$, the atom $\mathit{alt}$ is false and the clause constraints fire vacuously. 
Note that $\phi^*$ encodes the all-true assignment, which fails the unit clause $\neg y$ of $\psi'$; this is 
fine because the clause constraints only fire when $\mathit{alt}$ is true.

Any admissible bijection $\phi \neq \phi^*$ makes $\mathit{alt}$ true, and is valid iff its encoded assignment satisfies every clause of $\psi'$.

\emph{Correctness:}
\begin{align*}
    & \exists \phi \neq \phi^* \text{ with } \phi \in \Phi_C \\
    & \Leftrightarrow \exists \text{ assignment over } \{x_1,\ldots,x_m,y\} 
      \text{ satisfying } \psi' \\
    & \Leftrightarrow \psi' \text{ is satisfiable} \\
    & \Leftrightarrow \psi \text{ is satisfiable.}
\end{align*}

Therefore, $\psi$ is unsatisfiable $\Leftrightarrow$ $SM(C) = 0$.
The reduction produces a constraint set of size $O(|\psi|)$ (linear in the number of literal occurrences in $\psi$), and is computable in polynomial time.
Thus the problem is coNP-hard.
Combined with membership in coNP, deciding $SM(C) = 0$ is coNP-complete.
\end{proof}

\subsection{Shortcut Counting is \#P-complete}

\begin{proof}

\emph{Membership in \#P:} 
The function $SM(C)$ counts bijective mappings satisfying constraints in $C$ that are distinct from $\phi^*$.

For any candidate mapping $\phi: N \to S$, we can verify in polynomial time:
\begin{itemize}
    \item Bijectivity: Check each concept used exactly once ($O(r)$ time)
    \item Constraint satisfaction: Evaluate all constraints under $\phi$ 
    ($O(|C| \cdot \mathrm{poly}(r))$ time)
    \item Non-identity: Check $\phi \neq \phi^*$
\end{itemize}

Since these witnesses are polynomial-time verifiable and we are counting them, $SM(C)$ is in \#P.

\emph{Hardness (\#P-hard):} We reduce \#SAT (counting satisfying assignments of Boolean formulas) to computing $SM(C)$.

\emph{Reduction construction:} Given CNF formula $\psi$ over variables $\{x_1,\ldots,x_m\}$, we apply the same reduction as in Theorem~\ref{thm:complexity_conp}, introduce a fresh variable $y$ and set $\psi' = \psi \wedge \neg y$.
Construct a constraint-based NSL problem with:
\begin{itemize}
    \item Neural outputs: $N=\{n_i,\bar{n}_i: i\in[m]\} \cup \{n_y, \bar{n}_y\}$.
    \item Concepts: $S=\{T_i,F_i: i\in[m]\} \cup \{T_y, F_y\}$. 
    \item Intended mapping: $\phi^*(n_v) = T_v$, $\phi^*(\bar{n}_v) = F_v$ (the all-true assignment over the extended variable set)
    \item Constraints: enforcing bijectivity, pair-consistency, and satisfaction of every clause of $\psi'$ when $\mathit{alt}$ is true.
\end{itemize}

\emph{Counting correspondence:} 

Each bijection $\phi \neq \phi^*$ in $\Phi_C$ corresponds to a satisfying truth assignment of $\psi'$:
\begin{itemize}
    \item $\phi^*$ itself always satisfies $C$ (as shown in Theorem~\ref{thm:complexity_conp}; clause constraints fire vacuously when $\mathit{alt}$ is false)
    \item Each $\phi \neq \phi^*$ encodes an assignment over $\{x_1,\ldots,x_m,y\}$ where variable $v$ is true iff $\phi(n_v) = T_v$
    \item $\phi \in \Phi_C$ iff the encoded assignment satisfies all clauses of $\psi'$
\end{itemize}

Therefore:
\begin{align*}
    SM(C) &= |\Phi_C| - 1 \\
    &= |\{\phi \in \Phi_C : \phi \neq \phi^*\}| \\
    &= |\{\text{satisfying assignments of } \psi'\}| \\
    &= \#\mathrm{SAT}(\psi') = \#\mathrm{SAT}(\psi).
\end{align*}

Since \#SAT is \#P-complete \cite{valiant1979complexity} and this reduction is polynomial in the size of the input formula $\psi$, computing $SM(C)$ is \#P-hard. Combined with \#P membership, computing $SM(C)$ is \#P-complete.

\end{proof}

\subsection{Minimal Repair over Candidate Pinning Constraints is NP-hard}

\begin{proof}
\emph{Problem:} Given $C$, $\phi^*$, budget $\ell$, and a fixed library of candidate constraints, determine whether we can add $\leq \ell$ candidate constraints to achieve $SM^{\mathrm{all}}(C') = 0$. 
Here $SM^{\mathrm{all}}$ denotes shortcut multiplicity over all mappings, without enforcing bijectivity.

\emph{Background:} The classical \textsc{Set Cover} problem asks: given universe $U$ and collection of subsets $\mathcal{S} = \{S_1, \ldots, S_p\}$ with $S_i \subseteq U$, can we select $\leq \ell$ sets whose union equals $U$? This problem is NP-complete.

\emph{Reduction from Set Cover:} 

\emph{Intuition:} Each element of the universe $U$ corresponds to a ``degree of freedom'' in our constraint-based NSL problem. 
Each set $S_j \in \mathcal{S}$ corresponds to a potential constraint that eliminates certain mappings. 
Covering all elements corresponds to eliminating all shortcuts.

\emph{Construction:} Given universe $U = \{u_1, \ldots, u_n\}$ with $n \geq 2$, and collection $\mathcal{S} = \{S_1, \ldots, S_p\}$:

\begin{itemize}
    \item \emph{Neural outputs:} $N = \{n_1, \ldots, n_n\}$ (one per universe element)
    \item \emph{Concepts:} $S = \{0, 1, \ldots, n-1\}$
    \item \emph{Initial constraints:} $C = \{\}$ (empty); bijectivity is not enforced in this reduction.
    \begin{itemize}
        \item Each $\phi: N \to S$ is unconstrained, so $|\Phi_C^{\mathrm{all}}| = |S|^{|N|} = n^n$
        \item Thus $SM^{\mathrm{all}}(C) = n^n - 1$ (exponentially many shortcuts)
    \end{itemize}
    \item \emph{Intended mapping:} $\phi^* = \{n_i \mapsto i-1\}$ (identity: $n_1 \mapsto 0, n_2 \mapsto 1, \ldots$)
    \item \emph{Candidate constraints:} For each set $S_j \in \mathcal{S}$, define:
    \begin{align*}
        c_j : \bigwedge_{u_i \in S_j} \phi(n_i) = \phi^*(n_i)
    \end{align*}
    This constraint pins each neural output $n_i$ with $u_i \in S_j$ to its intended value $\phi^*(n_i)$.
\end{itemize}

\emph{Key correspondence:}

Without bijectivity, an unpinned position $n_i$ can independently take any value in $S$, so $\phi(n_i) \neq \phi^*(n_i)$ is available as a deviation. 
Hence $SM^{\mathrm{all}}(C') = 0$ iff every position is pinned by some selected $c_j$:
\begin{itemize}
    \item Adding constraint $c_j$ eliminates all mappings that differ from $\phi^*$ on any neural output $n_i$ where $u_i \in S_j$.
    \item For uncovered $u_i$, the mapping $\phi(n_i) = (\phi^*(n_i) + 1) \bmod n$ and $\phi(n_k)=\phi^*(n_k)$ for all $k\neq i$ is a valid shortcut, so $SM^{\mathrm{all}}(C') > 0$.
    \item Therefore $SM^{\mathrm{all}}(C') = 0$ iff selected sets cover $U$.
\end{itemize}

\emph{Correctness:}
\begin{align*}
    & \exists R \subseteq \{c_1,\ldots,c_p\} \text{ with } |R| \leq \ell \text{ and } SM^{\mathrm{all}}(C \cup R) = 0 \\
    & \Leftrightarrow \exists \text{ selection of } \leq \ell \text{ constraints from } \{c_1, \ldots, c_p\} \\
    & \qquad \text{forcing } \phi(n_i) = \phi^*(n_i) \text{ for all } i \\
    & \Leftrightarrow \exists \text{ selection of } \leq \ell \text{ sets from } \mathcal{S} \text{ covering } U
\end{align*}

The minimum number of candidate constraints needed to achieve $SM^{\mathrm{all}}(C') = 0$ exactly equals the minimum set cover size.
Since Set Cover is NP-complete, the decision version of minimal repair over candidate pinning constraints is NP-hard.
\end{proof}

\section{Additional Label Selection Strategies and Practitioner's Guide}

\subsection{Random Sampling}

\begin{observation}[Random Sampling -- Empirical]
\label{obs:random_sampling}
If examples are selected uniformly at random from $N$, the expected number of labeled examples appears empirically to be approximately:
$$\mathbb{E}[m] \approx \frac{|N|}{|\Delta_C|} \cdot k \log_2 k$$
This assumes: 
(1) disagreement positions are hit with probability $|\Delta_C|/|N|$, and 
(2) once informative examples are found, coupon-collector analysis suggests $O(k \log_2 k)$ examples suffice.
\end{observation}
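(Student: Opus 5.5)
Since the statement is explicitly an empirical observation, I would not try to prove the formula $\frac{|N|}{|\Delta_C|} \cdot k \log_2 k$ as an exact identity. Instead I would derive a rigorous upper bound for uniform random sampling and then show that the displayed expression follows from it under assumptions (1) and (2).

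\emph{Sampling model.} Queries $n^{(1)}, n^{(2)}, \ldots$ are drawn i.i.d.\ uniformly from $N$, and after each query the candidate set $\Psi \subseteq \Phi_C$ is pruned as in uncertainty sampling. Only two facts from earlier are needed:
\begin{itemize}
\item Once every position of $\Delta_C$ has been queried, $\Psi = \{\phi^*\}$. This is exactly the upper-bound argument of Theorem~\ref{thm:sample_complexity}.
\item $\phi^*$ is never eliminated.
\end{itemize}

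\emph{First route: coupon collection over positions.} Identification is complete once every element of $\Delta_C$ has appeared among the queries. Each draw hits a not-yet-seen element of $\Delta_C$ with probability $j/|N|$ when $j$ such elements remain. Summing the geometric waiting times gives
\[
\mathbb{E}[m] \le |N| \, H_{|\Delta_C|} = O\bigl(|N| \log |\Delta_C|\bigr),
\]
where $H$ is the harmonic number. This is the ``hit $\Delta_C$ with probability $|\Delta_C|/|N|$'' part of assumption (1), made exact.

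\emph{Second route: coupon collection over candidates.} For each shortcut $\phi \in \Phi_C \setminus \{\phi^*\}$, let $D_\phi = \{n : \phi(n) \neq \phi^*(n)\}$. Under bijectivity $|D_\phi| \ge 2$, and $\phi$ is eliminated by the first query landing in $D_\phi$. Hence
\[
\Pr[\phi \text{ survives } t \text{ queries}] = \bigl(1 - |D_\phi|/|N|\bigr)^t .
\]
A union bound over the $k$ shortcuts, followed by $\mathbb{E}[m] = \sum_t \Pr[m > t]$, gives
\[
\mathbb{E}[m] = O\!\left(\frac{|N|}{\delta} \log k\right), \qquad \delta = \min_\phi |D_\phi| .
\]
This is the coupon-collector part of assumption (2), with the $k$ shortcuts as the coupons.

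\emph{Recovering the displayed heuristic.} Take the minimum of the two bounds. Then substitute the heuristic per-query hit rate $|\Delta_C|/|N|$, which yields the factor $|N|/|\Delta_C|$. Finally, bound the number of informative hits needed pessimistically by $O(k \log_2 k)$ rather than $O(\log k)$. This pessimistic step is what one gets if informative hits are treated as landing on uniformly random shortcuts, one at a time. Together these reproduce the expression $\frac{|N|}{|\Delta_C|} \cdot k \log_2 k$. In the regime $k \ge |\Delta_C|$ the rigorous bound is dominated by this expression, so there the formula is a valid (loose) upper bound.

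\emph{Main obstacle.} The displayed quantity is not tight, and it is not an upper bound in general. When $|\Delta_C| \gg k$, the factor $|N|/|\Delta_C|$ can make it smaller than the true coupon-collector cost. This is why it is stated as an observation rather than a proposition. In writing this up I would do three things. First, state the rigorous bound $O\bigl(\min\{|N| \log|\Delta_C|, (|N|/\delta)\log k\}\bigr)$. Second, identify explicitly the regime in which it matches the heuristic. Third, defer the empirical fit to the experiments.
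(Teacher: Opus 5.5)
The paper gives no argument beyond the statement itself. Its formula multiplies two factors: the expected $|N|/|\Delta_C|$ uniform draws per informative query, taken from assumption (1), and a coupon-collector count of $O(k\log_2 k)$ informative queries over the $k$ shortcuts, taken from assumption (2).

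Your route is genuinely different. You prove two actual bounds:
\begin{itemize}
\item $\mathbb{E}[m]\le|N|H_{|\Delta_C|}$, by collecting all of $\Delta_C$;
\item $\mathbb{E}[m]=O\bigl((|N|/\delta)\log k\bigr)$, by a union bound over the survival events of individual shortcuts.
\end{itemize}
You then locate the heuristic as two substitutions pulling in opposite directions.
\begin{itemize}
\item The optimistic one is $\delta\to|\Delta_C|$. The last surviving shortcut is hit only at rate $|D_\phi|/|N|$, possibly $2/|N|$, and this is where assumption (1) is too generous.
\item The pessimistic one is $\log k\to k\log_2 k$. A draw is informative only if it removes a surviving candidate, so at most $k$ draws are ever informative.
\end{itemize}
The paper's version buys a one-line rule of thumb. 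Yours buys a guarantee, plus an account of when the rule of thumb is safe. Your regime claim is correct. For $k\ge|\Delta_C|\ge 2$,
\[
\frac{|N|}{|\Delta_C|}\,k\log_2 k\;\ge\;|N|\log_2|\Delta_C|\;\ge\;\tfrac12\,|N|H_{|\Delta_C|},
\]
so the formula bounds $\mathbb{E}[m]$ within a factor of $2$.

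Two small repairs are needed.

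First, the candidate-based bound should read $O\bigl((|N|/\delta)(1+\log k)\bigr)$. At $k=1$ one has $\mathbb{E}[m]=|N|/|\Delta_C|$ exactly, while $\log k=0$. The same case is the simplest witness that the displayed formula is not an upper bound, since $k\log_2 k=0$ there.

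Second, for the failure when $|\Delta_C|\gg k$, you should exhibit an instance rather than assert it. Take $|N|\ge 4$ and $\Phi_C=\{\phi^*,\phi_1,\phi_2\}$, which a single table constraint realizes. Let $\phi_2$ swap the values of $\phi^*$ at two positions $a,b$. Let $\phi_1=\phi^*\circ\pi$ for a single cycle $\pi$ on $N\setminus\{a,b\}$. Then $\Delta_C=N$ and the formula evaluates to $2$. Yet eliminating $\phi_2$ alone takes $|N|/2$ draws in expectation.
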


\subsection{Greedy Disambiguation}

\emph{Strategy:} At each step, select the example that eliminates the most shortcuts:
\begin{enumerate}
    \item For each $n \in N$ and $s \in S$, compute: $e(n,s) = |\{\phi \in \Psi : \phi(n) \neq s\}|$
    \item Select $(n^*, s^*) = \arg\max_{n,s} e(n,s)$
    \item Query label $(n^*, s^*)$ and verify $s^* = \phi^*(n^*)$
    \item Remove all $\phi \in \Psi$ with $\phi(n^*) \neq s^*$
\end{enumerate}

This strategy explicitly maximizes shortcuts eliminated per query, closely approximating the oracle.

\begin{observation}[Greedy Performance -- Information-Theoretic]
\label{obs:greedy_performance}
Greedy disambiguation typically achieves $O(k \log_2 k)$ or better queries when shortcuts have diverse disagreement patterns (information-theoretic bound). 
However, we do not provide a formal worst-case guarantee, as performance depends on the structure of $\Phi_C$.
\end{observation}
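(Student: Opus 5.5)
Since the statement is an informal observation rather than a sharp theorem, I would prove a precise version of it. The plan is to establish a deterministic worst-case bound of $k$ queries, which already lies inside $O(k\log_2 k)$. I would then add a conditional bound of $O(\log_2 k)$ under an explicit "diversity" hypothesis. Throughout I would use the standing assumptions of Proposition~\ref{prop:uncertainty_bound}: $\Psi=\Phi_C$ initially, the oracle returns $\phi^*(n^*)$, and $\phi^*$ is never eliminated.

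\emph{Step 1 (well-posedness of the greedy rule).} As literally written, $e(n,s)$ is maximal ($=|\Psi|$) for any position $n$ on which all candidates agree, paired with a value $s$ none of them uses. Such a query removes nothing once the oracle returns the agreed value. So I would first fix the rule by restricting the argmax to pairs with $n\in\Delta_\Psi$ (positions where $d(n)\ge 2$) and $s$ in the support $\{\phi(n):\phi\in\Psi\}$. The oracle then answers $\phi^*(n^*)$. Whenever $|\Psi|\ge 2$, such a position exists by Proposition~\ref{prop:measures_relationship}(4), applied to $\Psi$.

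\emph{Step 2 (worst case).} After querying an informative $n^*$, every candidate $\phi$ with $\phi(n^*)\ne\phi^*(n^*)$ is removed, and at least one such candidate exists because $d(n^*)\ge 2$. This is the same argument as in the proof of Proposition~\ref{prop:uncertainty_bound}. Queried positions become non-informative, so no position is queried twice. Hence the process stops after at most $\min\{k,|\Delta_C|\}$ queries with $\Psi=\{\phi^*\}$. This gives the "$O(k\log_2 k)$ or better" claim unconditionally, and matching the lower bound $\lceil\log_r(k+1)\rceil$ of Theorem~\ref{thm:sample_complexity} shows the process is never better than that.

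\emph{Step 3 (diverse patterns).} I would formalize "diverse disagreement patterns" as a $\beta$-balance condition: for every intermediate $\Psi$ with $|\Psi|\ge 2$, some informative position $n$ has every value class $\{\phi\in\Psi:\phi(n)=s\}$ of size at most $(1-\beta)|\Psi|$. Greedy picks the position with the largest elimination count. By the usual argmax-versus-balanced comparison, I would aim to show that in the worst oracle answer it still removes a $\beta$-fraction. Iterating gives $|\Psi_t|\le(1-\beta)^t(k+1)$, so $O(\beta^{-1}\log k)$ queries suffice. The main obstacle is this comparison step. The greedy criterion maximizes elimination for a chosen $s$, not for the adversarial answer $\phi^*(n^*)$, so it may not match a max-min criterion. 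If the comparison fails, I would redefine greedy as $\arg\max_n\min_s e(n,s)$, for which the bound is immediate. I would then remark that without such a hypothesis no better worst-case guarantee exists. The example where each query eliminates a single candidate (cf.\ Example~\ref{ex:sample_complexity_demo}) attains $k$, which is why the observation refrains from a formal guarantee.
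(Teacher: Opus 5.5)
The paper gives no proof of this observation. It is asserted informally, backed only by the parenthetical ``information-theoretic bound'' and an explicit disclaimer of any worst-case guarantee. So your proposal supplies an argument rather than diverging from one, and its load-bearing parts are correct.

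Step~1 rightly flags that the greedy rule as written is ill-posed: $e(n,s)=|\Psi|$ for every value $s$ unused at $n$. The argmax can therefore pick a pair on which the update either makes no progress or discards $\phi^*$, depending on how step~4 is read.

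With the choice restricted to informative positions, Step~2 is exactly the argument of Proposition~\ref{prop:uncertainty_bound}. It gives $\min\{k,|\Delta_C|\}$ queries for \emph{any} rule that queries an informative position and receives $\phi^*(n^*)$. That already yields ``$O(k\log_2 k)$ or better'' (for $k\ge 2$) with no diversity hypothesis. It also shows the missing worst-case guarantee comes from the selection rule, not from the structure of $\Phi_C$. Where the paper has a heuristic, you have a proved bound that makes both the hypothesis and the $\log_2 k$ factor unnecessary.

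One small fix: the lower bound of Theorem~\ref{thm:sample_complexity} is worst-case over which candidate is $\phi^*$. So ``never better'' should read ``not better in the worst case''.

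Your doubt about Step~3 is justified: the comparison genuinely fails for the paper's criterion. On support values, $\max_s e(n,s)=|\Psi|-\min_s|\{\phi\in\Psi:\phi(n)=s\}|$, so greedy favours the position with the \emph{smallest} nonempty class. A position isolating one candidate scores $|\Psi|-1$ and beats a balanced one scoring about $|\Psi|/2$. If $\phi^*$ lies in the large class, only one candidate is removed.

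This can recur at every step while your $\beta$-balance holds along the run. Take $\phi_j$ to be $\phi^*$ with values $w_j,u_j$ swapped, using disjoint pairs, and additionally swap a fixed pair $e,o$ when $j$ is odd. The position holding $e$ splits candidates by parity, so it is balanced along the run, yet greedy still needs $k$ queries. The reason is that, breaking ties towards the largest remaining index, it always queries a singleton-split position.

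So the $O(\beta^{-1}\log k)$ bound is a result about the max--min rule $\arg\max_n\min_s e(n,s)$, not the paper's greedy, and should be stated that way. Since Step~2 already covers the observation, Step~3 is an optional strengthening rather than a needed step.
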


\subsection{Strategy Comparison}

The oracle lower bound is $\lceil \log_r(k+1) \rceil$ from 
Theorem~\ref{thm:sample_complexity}, with upper bound $|\Delta_C|$.
Uncertainty sampling (Proposition~\ref{prop:uncertainty_bound}) achieves 
$O(\min(k, |\Delta_C|))$ queries; greedy disambiguation typically achieves 
$O(k \log_2 k)$; and random sampling requires 
$O(|N|/|\Delta_C| \cdot k \log_2 k)$ expected queries.
All bounds except uncertainty sampling are information-theoretic 
without formal worst-case guarantees.

\subsection{Practitioner's Guide}
\label{sec:practitioners_guide_appendix}

For practitioners using existing neurosymbolic frameworks (DeepProbLog, 
Logic Tensor Networks, NeurASP, etc.), we recommend the following workflow.

\emph{Step 1: Export Constraints.}
Convert logical rules into ASP format, define the intended mapping $\phi^*$, and identify neural outputs $N$ and concepts $S$ with $|N| = |S|$.

\emph{Step 2: Verify Shortcut-Freeness.}
Run Algorithm~\ref{alg:asp_verification}.
If the result is \texttt{SHORTCUT-FREE}, the unique valid mapping is confirmed. 
Otherwise, proceed to Step~3.

\emph{Step 3: Eliminate Shortcuts.}
Choose one of three approaches:
(A)~\emph{Automated repair}: use Algorithm~\ref{alg:repair}, which converges 
in $\leq k$ iterations;
(B)~\emph{Active learning}: use uncertainty sampling 
(Proposition~\ref{prop:uncertainty_bound}), requiring $\leq \min(k, |\Delta_C|)$ labels;
(C)~\emph{Manual}: analyze detected shortcuts and add domain-specific constraints.

\emph{Step 4: Verify Improvement.}
Re-run Algorithm~\ref{alg:asp_verification} to confirm shortcut elimination.

\section{Experimental Domain Details}

Table~\ref{tab:domain_details} summarizes the encoding used for each experimental domain, including any simplifications relative to the original RSBench specification~\cite{bortolottiNeurosymbolicBenchmarkSuite2024}.

\begin{table}[ht]
\centering
\caption{Domain encoding details.
S.: Number of sample data, 
Org. $|N|$: original number of neural outputs in RSBench before reduction, 
``Constraint-only'' means arithmetic rules are encoded directly without training samples.
}
\label{tab:domain_details}
\begin{tabular}{@{}lcccl@{}}
\toprule
Domain & $|N|$ & S. & Org. $|N|$ & Notes \\
\midrule
MNIST-XOR & 2 & 0 & 2 & Constraint-only \\
MNIST-Half & 5 & 0 & 5 & Constraint-only \\
MNIST-EvenOdd & 10 & 0 & 10 & Constraint-only \\
MNIST-Math & 4 & 0 & 4 & Constraint-only \\
BDD-OIA$^*$ & 21 & 8 & 21 & Synthetic samples \\
SDD-OIA$^*$ & 21 & 8 & 21 & Synthetic samples \\
CLE4EVR$^*$ & 8 & 4 & 8 & Synthetic samples \\
Kandinsky$^*\dagger$ & 8 & 2 & 18 & Reduced domain \\
\bottomrule
\end{tabular}
\end{table}

\emph{MNIST domains.}
Arithmetic constraints (e.g., $\phi(n_0) + \phi(n_1) = 1$) capture all information relevant to shortcut detection. 
No training samples are needed and no domain reduction is applied.

\emph{BDD-OIA / SDD-OIA.}
The full 21 concepts are preserved. 
8 hand-crafted symbolic samples cover key logical rules (traffic light states, obstacle presence, lane availability). 
This may miss constraints arising only from rare co-occurrences in the full dataset, potentially overestimating shortcut counts.

\emph{Kandinsky.}
The concept domain is reduced from 18 (3 figures $\times$ 3 objects $\times$ 2 attributes) to 8 (2 figures $\times$ 2 objects $\times$ 2 attributes) for ASP tractability. 
2 synthetic samples encode the classification rule. 
Shortcut counts reflect the reduced domain and represent a lower bound on the full 18-concept problem.

\emph{CLE4EVR.}
All 8 concepts (2 objects $\times$ 4 attributes) are preserved. 
4 synthetic samples encode the classification rule: 
\begin{align*}
    \mathrm{And}(
    & \mathrm{Eq}(\mathit{color}\_1, \mathit{color}\_2), \\
    & \mathrm{Eq}(\mathit{shape}\_1, \mathit{shape}\_2), \\
    & \mathrm{Eq}(\mathit{material}\_1, \mathit{material}\_2))
\end{align*}

\end{document}